\documentclass{scrartcl} % For LaTeX2e

% load packages
\usepackage[T1]{fontenc}
\usepackage{graphicx}
\usepackage{floatflt}
\usepackage{pgfplots}
\usepackage{bm}
\usepackage{rotating}
\usepackage{multirow}
\usepackage{booktabs}
\usepackage{xspace}
\usepackage{color}
\usepackage{braket}
\usepackage{algpseudocode}
\usepackage{tabularx}
\usepackage{pdfcomment}

\usepackage{amsmath} 
\usepackage{amssymb} 
\usepackage{amsthm}
\usepackage{bm}
\usepackage{rotating}
\usepackage{multirow}
\usepackage{booktabs}
\usepackage{placeins}
\usepgfplotslibrary{external} 
\usepackage{wrapfig}
\usepackage{graphicx}
\graphicspath{{./figures/}}
\usepackage{enumerate}
\usepackage{mathtools}
\usepackage{cleveref}
\usepackage{subcaption}
\usepackage{verbatim}

\usepackage{lipsum}
\usepackage{todonotes}
\usepackage{standalone}
\usepackage{hyperref}
\usepackage{url}
\usepackage{pgfplots}
\usepackage[section]{algorithm}
\usepackage{algpseudocode}

\newcommand{\indep}{\rotatebox[origin=c]{90}{$\models$}}

\DeclarePairedDelimiterX{\ExpArg}[1]{[}{]}{#1}
\newcommand{\Exp}{\ExpOp\ExpArg*}

\theoremstyle{plain}
\newtheorem{theorem}{Theorem}[section]
\newtheorem{lemma}[theorem]{Lemma}
\newtheorem{cor}{Corollary}

\theoremstyle{definition}
\newtheorem{definition}{Definition}[section]

\theoremstyle{remark}

\definecolor{lightgray}{gray}{0.95}

%-----------------------------------------------------------------------
%%% \input{newcmds}
%-----------------------------------------------------------------------

\newcommand{\Trace }[1]{\mbox{}{\bf{Tr}}\left(#1\right)}

\newcommand{\Norm}[1]{\mbox{}\left\|#1\right\|}
\newcommand{\NormS}[1]{\mbox{}\left\|#1\right\|^2}
\newcommand{\vectorize}[1]{\mathbf{vec}\left( #1 \right)}

\newcommand{\setlinespacing}[1]%
           {\setlength{\baselineskip}{#1 \defbaselineskip}}

% ---------------------------
\newcommand{\BlockDiagk}[1]{\mbox{}\left(%
\begin{array}{cc}
  \Sigma_{k} & \bf{0} \\
  \bf{0} &  \Sigma_{\rho-k}\\
\end{array}\right)}

\newcommand{\BlockDiagkk}[1]{\mbox{}\left(%
\begin{array}{cc}
  \Sigma_{k} & \bf{0} \\
  \bf{0} & \bf{0} \\
\end{array}\right)}

\newcommand{\BlockDiagkrk}[1]{\mbox{}\left(%
\begin{array}{cc}
  \bf{0} & \bf{0} \\
  \bf{0} & \Sigma_{\rho-k} \\
\end{array}\right)}

\newcommand{\BlockDiagkkh}[1]{\mbox{}\left(%
\begin{array}{c}
  \Sigma_{k} \\
  \bf{0} \\
\end{array}\right)}

\newcommand{\BlockDiagkrkh}[1]{\mbox{}\left(%
\begin{array}{c}
  \bf{0} \\
  \Sigma_{\rho-k} \\
\end{array}\right)}
%{\\\hspace*{\fill}\mbox{$\diamond$}}

%-----------------------------------------------------------------------

\long\def\killtext#1{}

\def\Exp{\hbox{\bf{E}}}

\def\diag{\hbox{\bf{diag}}}

\RequirePackage{natbib}
\title{Supervised Dimensionality Reduction via Distance Correlation Maximization}

\author{Praneeth Vepakomma\thanks{Authors contributed equally.}\\
Department of Statistics,\\
Rutgers University,\\
Piscataway, NJ - 08854
USA. \\
\texttt{praneeth@scarletmail.rutgers.edu},
\and Chetan Tonde\normalfont\textsuperscript{*}, Ahmed Elgammal \\
Department of Computer Science,
Rutgers University,\\
Piscataway, NJ - 08854
USA. \\
 \texttt{\{cjtonde,elgammal\}@cs.rutgers.edu} \\
}

\begin{document}
\maketitle
\begin{abstract}
In our work, we propose a novel formulation for supervised dimensionality reduction based on a nonlinear dependency criterion called Statistical Distance Correlation, \citep{Szekely:2007jx}. We propose an objective which is free of distributional assumptions on regression variables and regression model assumptions. Our proposed formulation is based on learning a low-dimensional feature representation $\mathbf{z}$, which maximizes the squared sum of Distance Correlations between low dimensional features $\mathbf{z}$ and response $y$, and also between features $\mathbf{z}$ and covariates $\mathbf{x}$. We propose a novel algorithm to optimize our proposed objective using the Generalized Minimization Maximizaiton method of \citet{parizi2015generalized}. We show superior empirical results on multiple datasets proving the effectiveness of our proposed approach over several relevant state-of-the-art supervised dimensionality reduction methods.
\end{abstract}

\section{Introduction}
\label{sec:intro}
Rapid developments of imaging technology, microarray data analysis, computer vision, neuroimaging, hyperspectral data analysis and many other applications call for the analysis of high-dimensional data. The problem of supervised dimensionality reduction is concerned with finding a low-dimensional representation of data such that this representation can be effectively used in a supervised learning task. Such representations help in providing a meaningful interpretation and visualization of the data, and also help to prevent overfitting when the number of dimensions greatly exceeds the number of samples, thus working as a form of regularization. In this paper we focus on supervised dimensionality reduction in the regression setting, where we consider the problem of predicting a univariate response $y_i \in \mathbb{R}$ from a vector of continuous covariates $\mathbf{x}_i \in \mathbb{R}^p$, for $i=1$ to $n$. 

Sliced Inverse Regression (SIR) of \citet{sir, sir1,sir2} is one of the earliest developed supervised dimensionality reduction techniques and is a seminal work that introduced the concept of a central subspace that we now describe. This technique aims to find a subspace given by the column space of a $p \times d$ matrix $\textbf{B}$ with $d << p$ such that  $\mathbf{y} \indep \mathbf{X}|\mathbf{B}^T\mathbf{X}$ 
where $\indep$ indicates statistical independence. Under mild conditions the intersection of all such dimension reducing subspaces is itself a dimension reducing subspace, and is called the central subspace \citep{cook}. SIR aims to estimate this central subspace. Sliced Average Variance Estimation (SAVE) of \citet{save1} and \citet{save2} is another early method that can be used to estimate the central subspace. SIR uses a sample version of the first conditional moment $\Exp{\mathbf{X}\mid Y}$ to construct an estimator of this subspace and SAVE uses the sample first and second conditional moments to estimate it. Likelihood Acquired Directions (LAD) of \citet{lad} is a technique that obtains the maximum likelihood estimator of the central subspace under assumptions of conditional normality of the predictors given the response. Like LAD, methods SIR and SAVE rely on elliptical distributional assumptions like Gaussianity of the data.

More recent methods that do not require any distributional assumptions on the marginal distribution of $\mathbf{x}$ or on the conditional distribution of $y$. The authors of Gradient Based Kernel Dimension Reduction (gKDR), \citet{gKDR}, use an equivalent formulation of the conditional independence relation  $\mathbf{y} \indep \mathbf{X}|\mathbf{B}^T\mathbf{X}$  using conditional cross-covariance operators and aim to find a $\mathbf{B}$ that maximizes the mutual information $I(\mathbf{B^TX},\mathbf{y})$. In this work, the authors estimate the conditional cross-covariance operators by using Gaussian kernels. gKDR instead uses kernels only to provide equivalent characterizations of conditional independence using sample estimators of cross-covariance operators.

Sufficient Component Analysis (SCA) of \citet{sca} is another technique where the $\mathbf{B}$ is also learnt using a dependence criterion. SCA aims to maximize the least-squares mutual information given by $SMI(Z,Y)=\frac{1}{2}\int\int (\frac{p_{zy}(z,y)}{p_{z}(z)p_{y}(y)}-1)^2dzdy$ between the projected features $\mathbf{Z} =\mathbf{B}^T\mathbf{X}$ and the response. This is done under orthonormal constraints over $\mathbf{B}$, and the optimal solution is found by approximating $\frac{p_{zy}(z,y)}{p_{z}(z)p_{y}(y)}$ using method of density ratio estimation \citep{densityRatio1, densityRatio2}, and also an analytical closed form solution for the minima is obtained. In \cite{lsdr} (LSDR), the authors optimize this objective using a natural gradient based iterative solution on the Steifel manifold $\mathbb{S}_{d}^{m}(\mathbb{R})$ via a line search along the geodesic in the direction of the natural gradient \citep{manifOpt1, manifOpt2}. 

Our contribution in this paper is as follows: We propose a new formulation for supervised dimensionality reduction that is based on a dependency criterion called Distance Correlation, \citep{dcor}. This setup is free of distributional, as well as regression model assumptions. The novelty in our formulation is that we do not restrict the transformation from $\mathbf{x}$ to $\mathbf{z}$ to be linear, as in case many of the above techniques. To further add, other works of \citet{dcorSubset1,dcorSubset2,dcorSubset3} have used Distance Correlation as a criterion for feature selection in a regression setting. In our work, we show benefits Distance Correlation as a criterion for supervised low-dimensional feature learning. 

In our work we use the following notation: The spectral radius of a matrix $\mathbf{M}$ is denoted by $\lambda_{max}(\mathbf{M})$, $i^{th}$ eigenvalue by $\lambda_i(\mathbf{M})$, and $i^{th}$ generalized eigenvalue $\mathbf{Ax}=\lambda_i\mathbf{Bx}$ by $\lambda_i(\mathbf{A}, \mathbf{B})$. Moreover, $\lambda_{max}(M)$ ($\lambda_{max}(\mathbf{A}, \mathbf{B})$), and $\lambda_{max}(M)$ ($\lambda_{min}(\mathbf{A}, \mathbf{B})$) respectively, the maximum and minimum eigenvalues (generalized eigenvalues) of  matrices $\mathbf{M}$, $\mathbf{A}$ and $\mathbf{B}$. We use the usual partial ordering for symmetric matrices: $\mathbf{A} \succeq \mathbf{B}$ means $\mathbf{A}-\mathbf{B}$ is positive semidefinite; similarly for the relationships $\succeq, \prec, \succ$. The norm $\Norm{\cdot}$ will be either the Euclidean norm for vectors or the norm that it induces for matrices, unless otherwise specified.

\section{Distance Correlation}
\label{sec:dc_definition}
Distance Correlation introduced by \citet{dcor} and \citet{dcor2,dcor3,dcor4} is a measure nonlinear dependencies between random vectors of arbitrary dimensions. We describe below $\alpha$-distance covariance which is an extended version of standard distance covariance for $\alpha=1$.
\begin{definition}{Distance Covariance \citep{Szekely:2007jx}, $\alpha$-dCov}:
	Distance covariance between random variables $\mathbf{x} \in \mathbb{R}^d$ and $\mathbf{y} \in \mathbb{R}^m$ with finite first moments is a nonnegative number given by
	\[
		\mathbb{\nu}^2(\mathbf{x},\mathbf{y})=\int_{\mathbb{R}^{d+m}}|f_{\mathbf{x},\mathbf{y}}(t,s)-f_\mathbf{x}(t)f_\mathbf{y}(s)|^2 w(t,s)dtds
	\]
where $f_\mathbf{x},f_\mathbf{y}$ are characteristic functions of $\mathbf{x},\mathbf{y}$, $f_{\mathbf{x},\mathbf{y}}$ is the joint characteristic function, and $w(t, s)$ is a weight function defined as $w(t, s) = (C(p,\alpha) C(q,\alpha) |t|^{\alpha+p}_p |s|_q^{\alpha+q})^{-1}$ with $C(d,\alpha) = \frac{2\pi^{d/2}\Gamma(1-\alpha/2)}{\alpha2^{\alpha}\Gamma((\alpha + d)/2)}$.
\end{definition}
The distance covariance is zero if and only if random variables $\mathbf{x}$ and $\mathbf{y}$ are independent. From above definition of distance covariance, we have the following expression for Distance Correlation.
\begin{definition}{Distance Correlation \citep{Szekely:2007jx} ($\alpha$-dCorr)}: The squared Distance Correlation between random variables $\mathbf{x} \in \mathbb{R}^d$ and $\mathbf{y} \in \mathbb{R}^m$ with finite first moments is a nonnegative number defined as
	\[
	\rho^2(\mathbf{x},\mathbf{y})
	    =\left\{ \begin{array}{cc}
				 \frac{\mathbb{\nu}^2(\mathbf{x},\mathbf{y})}{\sqrt{\mathbb{\nu}^2(\mathbf{x},\mathbf{x})\mathbb{\nu}^2(\mathbf{y},\mathbf{y})}}, 
				 & \mathbb{\nu}^2(\mathbf{x},\mathbf{x})\mathbb{\nu}^2(\mathbf{y},\mathbf{y})>0.\\
	        0, & \mathbb{\nu}^2(\mathbf{x},\mathbf{x})\mathbb{\nu}^2(\mathbf{y},\mathbf{y})=0.
	        \end{array} \right.
	\]
\end{definition}
The Distance Correlation defined above has the following interesting properties; 1) ${\rho}^2(\mathbf{x},\mathbf{x})$	 is defined for arbitrary dimensions of $\mathbf{x}$ and $\mathbf{y}$, 2) ${\rho}^2(\mathbf{x},\mathbf{y})=0$ if and only if $\mathbf{x}$ and $\mathbf{y}$ are independent, and 3) ${\rho}^2(\mathbf{x},\mathbf{y})$ satisfies the relation $0 \leq \rho^2(\mathbf{x},\mathbf{y}) \leq 1$. In our work, we use $\alpha$-Distance Covariance with $\alpha=2$ and in the following paper for simplicity just refer to it as Distance Correlation. 
 
 We define sample version of distance covariance given samples $\{ (\mathbf{x}_k,\mathbf{y}_k) | k = 1,2,\ldots, n \}$ sampled i.i.d. from joint distribution of random vectors $\mathbf{x} \in \mathbb{R}^d$ and $\mathbf{y} \in \mathbb{R}^m$. To do so, we define two squared Euclidean distance matrices $\mathbf{E}_\mathbf{X}$ and $\mathbf{E}_\mathbf{Y}$,  where each entry $[\mathbf{E}_\mathbf{X}]_{k,l} = \NormS{\mathbf{x}_k-\mathbf{x}_l}$ and $[\mathbf{E}_\mathbf{Y}]_{k,l} = \NormS{\mathbf{y}_k-\mathbf{y}_l}$ with $k,l \in \{ 1,2,\ldots, n\}$. These squared distance matrices are when double-centered, by making their row and column sums zero, and are denoted as $\widehat{\mathbf{E}}_{\mathbf{X}}, \widehat{\mathbf{Q}}_{\mathbf{X}}$, respectively. So given a double-centering matrix $\mathbf{J}=\mathbf{I}-\frac{1}{n}\mathbf{1}\mathbf{1}^T$, we have $\widehat{\mathbf{E}}_\mathbf{X}=\mathbf{J}\mathbf{E}_\mathbf{X}\mathbf{J}$ and $\widehat{\mathbf{E}}_\mathbf{Y}=\mathbf{J}\mathbf{E}_\mathbf{Y}\mathbf{J}$. Hence sample distance correlation (for $\alpha=2$) is defined as follows.

\begin{definition}{Sample Distance Correlation \citep{Szekely:2007jx}}:
Given i.i.d samples $\mathcal{X} \times \mathcal{Y} = \{ (\mathbf{x}_k,\mathbf{y}_k) | k = 1,2,3,\ldots, n\}$ and corresponding double centered Euclidean distance matrices $\widehat{\mathbf{E}}_\mathbf{X}$ and $\widehat{\mathbf{E}}_\mathbf{Y}$, then the squared sample distance correlation is defined as,
\[
    \hat{\mathbb{\nu}}^2(\mathbf{X},\mathbf{Y})=\frac{1}{n^2}\sum_{k,l=1}^{n}[\widehat{\mathbf{E}}_\mathbf{X}]_{k,l}[\widehat{\mathbf{E}}_\mathbf{Y}]_{k,l},
\] and equivalently sample distance correlation is given by
\[
	\hat{\rho}^2(\mathbf{X},\mathbf{Y})
	= \left\{ \begin{array}{cc}
    	\frac{\mathbf{\hat{\nu}}^2(\mathbf{X},\mathbf{Y})}{\sqrt{\mathbf{\hat{\nu}}^2(\mathbf{X},\mathbf{X})\mathbf{\hat{\nu}}^2(\mathbf{Y},\mathbf{Y})}}, & \mathbf{\hat{\nu}}^2(\mathbf{X},\mathbf{X})\mathbf{\hat{\nu}}^2(\mathbf{Y},\mathbf{Y})>0. \\
    	0, & \mathbf{\hat{\nu}}^2(\mathbf{X},\mathbf{X})\mathbf{\hat{\nu}}^2(\mathbf{Y},\mathbf{Y})=0.
	\end{array}
	\right.
\].	
\end{definition}

\section{Laplacian Formulation of Sample Distance Correlation}
\label{sec:laplacian_formulation}
 In this section, we propose a Laplacian formulation of sample distance covariance, and sample distance correlation, which we later use to propose our objective used for supervised dimensionality reduction (SDR).
 
 A graph Laplacian version of sample distance correlation can be obtained as follows,
%%%%%%%%%%%%%%%%%LEMMA%%%%%%%%%%%%%%%%
\begin{lemma}
Given matrices of squared Euclidean distances $\mathbf{E}_\mathbf{X}$ and $\mathbf{E}_\mathbf{Y}$, and Laplacians $\mathbf{L}_\mathbf{X}$ and $\mathbf{L}_\mathbf{Y}$ formed over adjacency matrics $\widehat{\mathbf{E}}_\mathbf{X}$ and $\widehat{\mathbf{E}}_\mathbf{Y}$, the square of sample distance correlation $\hat{\rho}^2(\mathbf{X},\mathbf{Y})$ is given by
\begin{equation}
    \hat{\rho}^2(\mathbf{X},\mathbf{Y}) = \frac{
        \Trace{\mathbf{X}^T\mathbf{L}_\mathbf{Y}\mathbf{X}}
        }{\sqrt{
            \Trace{\mathbf{Y}^T\mathbf{L}_{\mathbf{Y}}\mathbf{Y}}
            \Trace{\mathbf{X}^T\mathbf{L}_\mathbf{X}\mathbf{X}}
        }}.
    \label{proofEqn}
\end{equation}
\end{lemma}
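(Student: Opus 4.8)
The plan is to reduce \eqref{proofEqn} to the sample distance covariance formula $\hat{\nu}^2(\mathbf{X},\mathbf{Y}) = \tfrac{1}{n^2}\sum_{k,l}[\widehat{\mathbf{E}}_\mathbf{X}]_{k,l}[\widehat{\mathbf{E}}_\mathbf{Y}]_{k,l}$ from the preceding definition, by showing that each of the three traces appearing in \eqref{proofEqn} equals the matching distance covariance up to one common scalar $n^2/2$ that then cancels in the ratio.

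First I would record the standard graph-Laplacian quadratic-form identity: if $\mathbf{W}$ is a symmetric $n\times n$ matrix and $\mathbf{L}_{\mathbf{W}} = \mathbf{D}_{\mathbf{W}} - \mathbf{W}$ is its Laplacian, with $[\mathbf{D}_{\mathbf{W}}]_{kk}=\sum_l [\mathbf{W}]_{k,l}$, then for any $\mathbf{Z}$ whose $k$-th row is $\mathbf{z}_k^T$,
\[
    \Trace{\mathbf{Z}^T\mathbf{L}_{\mathbf{W}}\mathbf{Z}} = \tfrac12\sum_{k,l} [\mathbf{W}]_{k,l}\,\NormS{\mathbf{z}_k-\mathbf{z}_l}.
\]
This is a one-line expansion of $\NormS{\mathbf{z}_k-\mathbf{z}_l}=\NormS{\mathbf{z}_k}+\NormS{\mathbf{z}_l}-2\mathbf{z}_k^T\mathbf{z}_l$ together with the definition of $\mathbf{D}_{\mathbf{W}}$, and it holds even though $\widehat{\mathbf{E}}_\mathbf{X},\widehat{\mathbf{E}}_\mathbf{Y}$ carry negative entries. (Since these two matrices are double-centered their row sums vanish, so in fact $\mathbf{D}=\mathbf{0}$ and $\mathbf{L}_\mathbf{X}=-\widehat{\mathbf{E}}_\mathbf{X}$, $\mathbf{L}_\mathbf{Y}=-\widehat{\mathbf{E}}_\mathbf{Y}$; but only the displayed identity is needed below.)

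Applying this with $\mathbf{W}=\widehat{\mathbf{E}}_\mathbf{Y}$, $\mathbf{Z}=\mathbf{X}$, and using $\NormS{\mathbf{x}_k-\mathbf{x}_l}=[\mathbf{E}_\mathbf{X}]_{k,l}$, gives $\Trace{\mathbf{X}^T\mathbf{L}_\mathbf{Y}\mathbf{X}}=\tfrac12\Trace{\widehat{\mathbf{E}}_\mathbf{Y}\mathbf{E}_\mathbf{X}}$. The crux is then to replace the raw distance matrix $\mathbf{E}_\mathbf{X}$ by its double-centered version: using $\mathbf{J}^2=\mathbf{J}=\mathbf{J}^T$ one has $\widehat{\mathbf{E}}_\mathbf{Y}=\mathbf{J}\widehat{\mathbf{E}}_\mathbf{Y}\mathbf{J}$, so cyclicity of the trace yields $\Trace{\widehat{\mathbf{E}}_\mathbf{Y}\mathbf{E}_\mathbf{X}}=\Trace{\mathbf{J}\widehat{\mathbf{E}}_\mathbf{Y}\mathbf{J}\mathbf{E}_\mathbf{X}}=\Trace{\widehat{\mathbf{E}}_\mathbf{Y}\,\mathbf{J}\mathbf{E}_\mathbf{X}\mathbf{J}}=\Trace{\widehat{\mathbf{E}}_\mathbf{Y}\widehat{\mathbf{E}}_\mathbf{X}}$. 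Hence $\Trace{\mathbf{X}^T\mathbf{L}_\mathbf{Y}\mathbf{X}}=\tfrac12\sum_{k,l}[\widehat{\mathbf{E}}_\mathbf{X}]_{k,l}[\widehat{\mathbf{E}}_\mathbf{Y}]_{k,l}=\tfrac{n^2}{2}\hat{\nu}^2(\mathbf{X},\mathbf{Y})$.

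Repeating the computation with $(\mathbf{W},\mathbf{Z})=(\widehat{\mathbf{E}}_\mathbf{X},\mathbf{X})$ and $(\widehat{\mathbf{E}}_\mathbf{Y},\mathbf{Y})$ gives $\Trace{\mathbf{X}^T\mathbf{L}_\mathbf{X}\mathbf{X}}=\tfrac{n^2}{2}\hat{\nu}^2(\mathbf{X},\mathbf{X})$ and $\Trace{\mathbf{Y}^T\mathbf{L}_\mathbf{Y}\mathbf{Y}}=\tfrac{n^2}{2}\hat{\nu}^2(\mathbf{Y},\mathbf{Y})$. Substituting the three identities into the right-hand side of \eqref{proofEqn}, the factor $n^2/2$ cancels and leaves $\hat{\nu}^2(\mathbf{X},\mathbf{Y})/\sqrt{\hat{\nu}^2(\mathbf{X},\mathbf{X})\hat{\nu}^2(\mathbf{Y},\mathbf{Y})}=\hat{\rho}^2(\mathbf{X},\mathbf{Y})$ by definition; in the degenerate case $\hat{\nu}^2(\mathbf{X},\mathbf{X})\hat{\nu}^2(\mathbf{Y},\mathbf{Y})=0$ one of $\widehat{\mathbf{E}}_\mathbf{X},\widehat{\mathbf{E}}_\mathbf{Y}$ vanishes, so all three traces are $0$ and both sides equal $0$ under the convention already built into the definition of $\hat{\rho}^2$. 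I do not anticipate a genuine obstacle; the only step needing care is the trace-cyclicity argument that swaps $\mathbf{E}_\mathbf{X}$ for $\widehat{\mathbf{E}}_\mathbf{X}$ (and likewise inside the denominators), which is exactly what makes the uncentered distances appearing in the Laplacian quadratic form harmless.
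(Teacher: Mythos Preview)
Your proof is correct and follows the same overall skeleton as the paper: invoke the Laplacian quadratic-form identity $\Trace{\mathbf{Z}^T\mathbf{L}_{\mathbf{W}}\mathbf{Z}}=\tfrac12\sum_{k,l}[\mathbf{W}]_{k,l}[\mathbf{E}_\mathbf{Z}]_{k,l}$, show each trace equals $\tfrac{n^2}{2}$ times the corresponding sample distance covariance, and cancel the common factor in the ratio.

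The only noteworthy difference is in the crux step that passes from $\mathbf{E}_\mathbf{X}$ to $\widehat{\mathbf{E}}_\mathbf{X}$. The paper does this entrywise: it invokes Torgerson's identity $\widehat{\mathbf{E}}_\mathbf{X}=-2\widetilde{\mathbf{X}}\widetilde{\mathbf{X}}^T$, expands $[\mathbf{E}_\mathbf{X}]_{i,j}$ in terms of the diagonal and off-diagonal entries of $\widehat{\mathbf{E}}_\mathbf{X}$, and then uses the vanishing row/column sums of $\widehat{\mathbf{E}}_\mathbf{Y}$ to kill the diagonal contributions. Your argument replaces this with the single matrix identity $\Trace{\widehat{\mathbf{E}}_\mathbf{Y}\mathbf{E}_\mathbf{X}}=\Trace{\mathbf{J}\widehat{\mathbf{E}}_\mathbf{Y}\mathbf{J}\,\mathbf{E}_\mathbf{X}}=\Trace{\widehat{\mathbf{E}}_\mathbf{Y}\,\mathbf{J}\mathbf{E}_\mathbf{X}\mathbf{J}}=\Trace{\widehat{\mathbf{E}}_\mathbf{Y}\widehat{\mathbf{E}}_\mathbf{X}}$ via idempotence of $\mathbf{J}$ and trace cyclicity. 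The two are equivalent in content (both exploit that $\widehat{\mathbf{E}}_\mathbf{Y}$ is already double-centered), but your version is more compact, avoids the Torgerson detour, and makes transparent why the uncentered $\mathbf{E}_\mathbf{X}$ can be swapped for $\widehat{\mathbf{E}}_\mathbf{X}$. You also explicitly dispose of the degenerate case $\hat{\nu}^2(\mathbf{X},\mathbf{X})\hat{\nu}^2(\mathbf{Y},\mathbf{Y})=0$, which the paper leaves implicit.
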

\begin{proof}
Given matrices $\widehat{\mathbf{E}}_\mathbf{X}$, $\widehat{\mathbf{E}}_\mathbf{Y}$, and column centered matrices $\widetilde{\mathbf{X}}$, $\widetilde{\mathbf{Y}}$, from result of \citet{torgerson1952multidimensional} we have that $\widehat{\mathbf{E}}_\mathbf{X}=-2\widetilde{\mathbf{X}}\widetilde{\mathbf{X}}^T$ and $\widehat{\mathbf{E}}_\mathbf{Y}=-2\widetilde{\mathbf{Y}}\widetilde{\mathbf{Y}}^T$. In the problem of multidimensional scaling (MDS) \citep{borg2005modern}, we know for a given adjacency matrix say  $\mathbf{W}$ and a Laplacian matrix $\mathbf{L}$,
\begin{equation} 
    \Trace{\mathbf{X}^T\mathbf{L}\mathbf{X}} = \frac{1}{2} \sum_{i,j}[\mathbf{W}]_{ij}[\mathbf{E}_\mathbf{X}]_{i,j}.
\end{equation}
Now for the Laplacian $\mathbf{L}=\mathbf{L}_{\mathbf{X}}$ and adjacency matrix $\mathbf{W}=\widehat{\mathbf{E}}_\mathbf{Y}$ we can represent $\Trace{\mathbf{X}^T\mathbf{L}_{\mathbf{Y}}\mathbf{X}}$ in terms of $\widehat{\mathbf{E}}_\mathbf{Y}$ as follows,
\begin{align*}
    \Trace{\mathbf{X}^T\mathbf{L}_\mathbf{Y}\mathbf{X}}
    = &\frac{1}{2}\sum_{i,j=1}^{n}[\widehat{\mathbf{E}}_\mathbf{Y}]_{i,j}[\mathbf{E}_\mathbf{X}]_{i,j}.
\end{align*}
From the fact $[\mathbf{E}_\mathbf{X}]_{i,j}=(\braket{\widetilde{\mathbf{x}}_i,\widetilde{\mathbf{x}}_i}+\braket{\widetilde{\mathbf{x}}_j,\widetilde{\mathbf{x}}_j}-2\braket{\widetilde{\mathbf{x}}_i,\widetilde{\mathbf{x}}_j})$, and also $\widehat{\mathbf{E}}_\mathbf{X}=-2\widetilde{\mathbf{X}}\widetilde{\mathbf{X}}^T$ we get
\begin{align*}
    \Trace{\mathbf{X}^T\mathbf{L}_\mathbf{Y}\mathbf{X}}
    &=-\frac{1}{4}\sum_{i,j=1}^{n}[\widehat{\mathbf{E}}_\mathbf{Y}]_{i,j}([\widehat{\mathbf{E}}_\mathbf{X}]_{i,i}+[\widehat{\mathbf{E}}_\mathbf{X}]_{j,j}-2[\widehat{\mathbf{E}}_\mathbf{X}]_{i,j}) \\
    =&\frac{1}{2}\sum_{i,j}[\widehat{\mathbf{E}}_\mathbf{X}]_{i,j}[\widehat{\mathbf{E}}_\mathbf{Y}]_{i,j}
    -\frac{1}{4}\sum_{j}^{n}[\widehat{\mathbf{E}}_\mathbf{X}]_{j,j}\sum_{i}^{n}[\widehat{\mathbf{E}}_\mathbf{Y}]_{i,j} 
    -\frac{1}{4}\sum_{i}^{n}[\widehat{\mathbf{E}}_{X}]_{i,i}\sum_{j}^{n}[\widehat{\mathbf{E}}_\mathbf{Y}]_{i,j} \nonumber 
\end{align*}
Since $\widehat{\mathbf{E}}_\mathbf{X}$ and $\widehat{\mathbf{E}}_\mathbf{Y}$ are double centered matrices $\sum_{i=1}^{n}[\widehat{\mathbf{E}}_\mathbf{Y}]_{i,j} = \sum_{j=1}^{n}[\widehat{\mathbf{E}}_\mathbf{Y}]_{i,j} = 0$ it follows that
\begin{align*}
    \Trace{\mathbf{X}^T\mathbf{L}_\mathbf{Y}\mathbf{X}}=&\frac{1}{2}\sum_{i,j}[\widehat{\mathbf{E}}_\mathbf{X}]_{i,j}[\widehat{\mathbf{E}}_\mathbf{Y}]_{i,j}. \nonumber 
\end{align*}
It also follows that
\[
    \hat{\nu ^2}(\mathbf{X},\mathbf{Y})
    =\frac{1}{n^2}\sum_{i,j=1}^{n}[\widehat{\mathbf{E}}_\mathbf{Y}]_{i,j}[\mathbf{E}_\mathbf{X}]_{i,j} =\frac{2}{n^2}\Trace{\mathbf{X}^T\mathbf{L}_\mathbf{Y}\mathbf{X}}
\]
Similarly, we can express the sample distance covariance using Laplacians $\mathbf{L}_{\mathbf{X}}$ and $\mathbf{L}_{\mathbf{Y}}$ as 
\begin{align*}
    \hat{\nu ^2}(\mathbf{X},\mathbf{Y})
    =\left( \frac{2}{n^2} \right)\Trace{\mathbf{X}^T\mathbf{L}_\mathbf{Y}\mathbf{X}}
    =\left( \frac{2}{n^2} \right)\Trace{\mathbf{Y}^T\mathbf{L}_\mathbf{X}\mathbf{Y}}.
\end{align*}
The sample distance variances can be expressed as 
$\hat{\nu}^2(\mathbf{X},\mathbf{X})=\left(\frac{2}{n^2}\right) \Trace{\mathbf{X}^T\mathbf{L}_\mathbf{X}\mathbf{X}}$ and  
$\hat{\nu} ^2(\mathbf{Y},\mathbf{Y}) = \left(\frac{2}{n^2}\right)\Trace{\mathbf{Y}^T\mathbf{L}_\mathbf{Y}\mathbf{Y}}$ substituting back into expression of sample distance correlation above we get Equation~\ref{proofEqn}.
\end{proof}

\section{Framework}
\label{sec:problem_formulation}
\subsection{Problem Statement}
The goal in supervised dimensionality reduction (SDR) is to learn a low dimensional representation $\mathbf{Z} \in \mathbb{R}^{n \times p}$ of input features $\mathbf{X} \in \mathbb{R}^{n \times d}$, so as to predict the respone vector $\mathbf{y} \in \mathbb{R}$ from $\mathbf{Z}$. The intuition being that $\mathbf{Z}$ captures all information relevant to predict $\mathbf{y}$. Also, during testing, for out-of-sample prediction, for a new data point $\mathbf{x}^*$, we estimate $\mathbf{z}^*$ assuming that it is predictable from $\mathbf{x}^*$. In our proposed formulation, we use aforementioned Laplacian based sample distance correlation to measure dependencies between variables. We propose maximize dependencies between the low dimensional features $\mathbf{Z}$ and response vector $\mathbf{y}$, and also low dimensional features $\mathbf{Z}$ with input features $\mathbf{X}$.  Our objective is to maximize the sum of squares of these two sample distance correlations which is given by,
\begin{align}
f(\mathbf{Z}) & = 
	\hat{\rho}^2(\mathbf{X},\mathbf{Z}) + \hat{\rho}^2(\mathbf{Z},\mathbf{y}) \\
f(\mathbf{Z}) & = \frac{
			\Trace{\mathbf{Z}^T\mathbf{L}_\mathbf{X}\mathbf{Z}}
		}{\sqrt{
			\Trace{\mathbf{X}^T\mathbf{L}_\mathbf{X}\mathbf{X}}
			\Trace{\mathbf{Z}^T\mathbf{L}_\mathbf{Z}\mathbf{Z}}
		}}
	+ \frac{
			\Trace{\mathbf{Z}^T\mathbf{L}_\mathbf{y}\mathbf{Z}}
		}{\sqrt{
			\Trace{\mathbf{y}^T\mathbf{L}_\mathbf{y}\mathbf{y}}
			\Trace{\mathbf{Z}^T\mathbf{L}_\mathbf{Z}\mathbf{Z}}
		}}.
\end{align}
On simplification we get the following optimization problem which we refer to as \textbf{Problem (P)}.
\begin{align}
	\max_{\mathbf{Z}} \hspace{1cm}
	f(\mathbf{Z}) 
	= \frac{
		\Trace{\mathbf{Z}^T\mathbf{S}_{\mathbf{X},\mathbf{y}}\mathbf{Z}}
	}{\sqrt{
		\Trace{\mathbf{Z}^T\mathbf{L}_{\mathbf{Z}}\mathbf{Z}}
	}} & & \textbf{Problem (P)} \nonumber
    %\label{eqn:lossfunction}
\end{align}
where $k_X=\frac{1}{\sqrt{\Trace{\mathbf{X}^T\mathbf{L}_\mathbf{X}\mathbf{X}}}}$,   $k_Y=\frac{1}{\sqrt{\Trace{\mathbf{y}^T\mathbf{L}_\mathbf{y}\mathbf{y}}}}$ are constants, and $\mathbf{S}_{\mathbf{X},\mathbf{y}}=k_X\mathbf{L}_{\mathbf{X}} + k_Y \mathbf{L}_{\mathbf{y}}$.

\subsection{Algorithm}
In the proposed problem (\textbf{Problem (P)}), we observe that numerator of our objective is convex while denominator is non-convex due the presence of a square root and a nonlinear Laplacian term  $\mathbf{L}_\mathbf{Z}$ on $\mathbf{Z}$. Hence, this makes direct optimization of this objective practically infeasible. So to optimize \textbf{Problem (P)}, we present a surrogate objective \textbf{Problem (Q)} which lower bounds our proposed original objective. We maximize this lower bound with respect to $\mathbf{Z}$ and show that optimizing this surrogate objective \textbf{Problem (Q)} (lower bound), also maximizes the proposed objective in \textbf{Problem (P)}. We do so by utlizing the Generalized Minorization-Maximization (G-MM) framework of \citet{parizi2015generalized}. 

The G-MM framework of \citet{parizi2015generalized} is an extension of the well known MM framework of \citet{Lange:2000bv}. It removes the equality constraint between both objectives at every iteration $\mathbf{Z}_k$, except at initialization step $\mathbf{Z}_0$. This allows the use a broader class of surrogates that avoid maximization iterations being trapped at sharp local maxima, and also makes the problem less sensitive to problem initializations.

The surrogate lower bound objective is as follows,
\begin{align*}
	\max_{\mathbf{Z}} \hspace{1cm}
	g(\mathbf{Z} , \mathbf{M}) 
	= \frac{
		\Trace{\mathbf{Z}^T\mathbf{S}_{\mathbf{X},\mathbf{y}}\mathbf{Z}}
	}{
		\Trace{\mathbf{Z}^T\mathbf{L}_{\mathbf{M}}\mathbf{Z}}
	} & & \textbf{Problem (Q)}
    %\label{eqn:surrogatefunction}
\end{align*}
where $\mathbf{M} \in \mathbb{R}^{n \times d}$ belongs to the set of column-centered matrices.

The surrogate problem (\textbf{Problem (Q)}) is convex in both its numerator and denominator for a fixed auxiliary variable $\mathbf{M}$. Theorem~\ref{thm:connectingthm} provides the required justification that under certain conditions, maximizing the surrogate \textbf{Problem (Q)} also maximizes the proposed objective \textbf{Problem (P)} . 

An outline of the strategy for optimization is as follows: 
\begin{enumerate}[a)]
    \item \textbf{Initialize}: Initialize $\mathbf{Z}_{0} = \left[ c\mathbf{J}_d,  \mathbf{0}_{(n-d) \times d}^T \right]^T$, a column-centered matrix where $c=\frac{1}{\sqrt[4]{2(d-1)}}$ and $\mathbf{J}_d \in \mathbb{R}^{d\times d}$ is a centering matrix. This is motivated by statement 1) in proof of Theorem~\ref{thm:connectingthm}. 
    \item \label{enum:strat:optim}\textbf{Optimize}: Maximize the surrogate lower bound $\mathbf{Z}_{k+1} = \arg \max g(\mathbf{Z} , \mathbf{Z}_k)$ (See section \ref{sec:theory}).
    \item \label{enum:strat:rescale}\textbf{Rescaling}: Rescale $\mathbf{Z}_{k+1}\gets\kappa\mathbf{Z}_{k+1}$ such that $\Trace{\mathbf{Z}_{k+1}\mathbf{L}_{\mathbf{Z}_{k+1}}\mathbf{Z}_{k+1}}$ is greater than one. This is motivated by proof of statement 3) of Theorem~\ref{thm:connectingthm}, and also the fact that $g(\mathbf{Z} , \mathbf{M}) =g(\kappa\mathbf{Z} , \mathbf{M})$ and $f(\mathbf{Z}) =f(\kappa\mathbf{Z})$ for any scalar $\kappa$.
    \item Repeat step~\ref{enum:strat:optim} and \ref{enum:strat:rescale} above until convergence.
\end{enumerate}

\begin{theorem}
	Under above strategy, maximizing the surrogate \textbf{Problem Q} also maximizes \textbf{Problem P}.
	\label{thm:connectingthm}
\end{theorem}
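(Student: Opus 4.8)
The plan is to verify the three conditions that the Generalized Minorization--Maximization (G-MM) framework of \citet{parizi2015generalized} requires in order to guarantee that iterating the surrogate maximization monotonically improves the original objective, and then invoke their convergence theorem. Concretely, writing $f$ for the objective of \textbf{Problem (P)} and $g(\cdot,\mathbf{M})$ for the surrogate of \textbf{Problem (Q)}, I would establish: (1) at the initialization $\mathbf{Z}_0$ described in step~(a), the surrogate touches the objective, i.e.\ $g(\mathbf{Z}_0,\mathbf{Z}_0)=f(\mathbf{Z}_0)$ (this is the ``statement 1)'' referenced by the initialization and the choice of the constant $c=1/\sqrt[4]{2(d-1)}$, which is exactly the scaling making $\Trace{\mathbf{Z}_0^T\mathbf{L}_{\mathbf{Z}_0}\mathbf{Z}_0}=1$ so that the square root in $f$ and the linear denominator in $g$ agree); (2) $g$ lower-bounds $f$ after the rescaling step, i.e.\ $g(\mathbf{Z},\mathbf{M})\le f(\mathbf{Z})$ whenever $\Trace{\mathbf{Z}^T\mathbf{L}_{\mathbf{Z}}\mathbf{Z}}\ge 1$ and $\mathbf{M}$ is chosen appropriately — this is ``statement 3)'', and it should follow from the elementary inequality $t \le \sqrt{t}$ for $t\in[0,1]$ together with a comparison between $\Trace{\mathbf{Z}^T\mathbf{L}_{\mathbf{M}}\mathbf{Z}}$ and $\Trace{\mathbf{Z}^T\mathbf{L}_{\mathbf{Z}}\mathbf{Z}}$; and (3) scale-invariance $f(\kappa\mathbf{Z})=f(\mathbf{Z})$ and $g(\kappa\mathbf{Z},\mathbf{M})=g(\kappa\mathbf{Z},\mathbf{M})$, which is immediate by homogeneity (numerator and denominator are both degree-$2$ homogeneous in $\mathbf{Z}$, noting that $\mathbf{L}_{\mathbf{Z}}$ is built from an adjacency matrix that scales quadratically, so the square root in the denominator of $f$ is degree $2$ as well) and which legitimizes the rescaling in step~(c) without changing objective values.

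Given these three facts, the monotone-ascent argument runs as follows. Suppose at iteration $k$ we have $\mathbf{Z}_k$ with $\Trace{\mathbf{Z}_k^T\mathbf{L}_{\mathbf{Z}_k}\mathbf{Z}_k}\ge 1$ (true after the rescaling step). Setting $\mathbf{M}=\mathbf{Z}_k$ and maximizing, $\mathbf{Z}_{k+1}=\arg\max_{\mathbf{Z}} g(\mathbf{Z},\mathbf{Z}_k)$ gives $g(\mathbf{Z}_{k+1},\mathbf{Z}_k)\ge g(\mathbf{Z}_k,\mathbf{Z}_k)$. Then, using scale-invariance of $g$ and $f$ to pass through the rescaling, and using the lower-bound property $f(\mathbf{Z}_{k+1})\ge g(\mathbf{Z}_{k+1},\mathbf{Z}_k)$ after rescaling $\mathbf{Z}_{k+1}$ so the denominator trace is $\ge 1$, we chain
\[
f(\mathbf{Z}_{k+1}) \;\ge\; g(\mathbf{Z}_{k+1},\mathbf{Z}_k) \;\ge\; g(\mathbf{Z}_k,\mathbf{Z}_k) \;\ge\; f(\mathbf{Z}_k),
\]
where the last inequality uses either equality at $k=0$ (condition (1)) or, for $k\ge 1$, again the lower-bound property $g(\mathbf{Z}_k,\mathbf{Z}_k)\ge$ something — here care is needed, and in fact the cleanest route is the one G-MM takes: one does \emph{not} need $g(\mathbf{Z}_k,\mathbf{Z}_k)=f(\mathbf{Z}_k)$ for $k\ge1$; one only needs the touching condition at $k=0$ plus the global lower bound, because $f(\mathbf{Z}_{k+1})\ge g(\mathbf{Z}_{k+1},\mathbf{Z}_k)\ge g(\mathbf{Z}_k,\mathbf{Z}_k)$ and separately $f(\mathbf{Z}_k)$ is bounded by whatever the previous step produced. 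I would therefore present the bound exactly in the G-MM bookkeeping: define the sequence of surrogate values and show it is nondecreasing and dominated by $f$, with equality injected once at initialization, so that $f(\mathbf{Z}_k)$ is nondecreasing and bounded above by $1$ (from $0\le\hat\rho^2\le 1$ for each term, total $\le 2$, actually $\le 2$), hence convergent.

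The main obstacle I anticipate is condition (2), the lower-bound property: one must show $\Trace{\mathbf{Z}^T\mathbf{L}_{\mathbf{M}}\mathbf{Z}} \ge \sqrt{\Trace{\mathbf{Z}^T\mathbf{L}_{\mathbf{Z}}\mathbf{Z}}}$ (so that $g\le f$), and this is where the auxiliary variable $\mathbf{M}$ and the rescaling interact. The Laplacian $\mathbf{L}_{\mathbf{M}}$ depends on $\mathbf{M}$ only through its double-centered squared-distance matrix, and by the Torgerson identity from the Lemma this is $-2\widetilde{\mathbf{M}}\widetilde{\mathbf{M}}^T$; so $\Trace{\mathbf{Z}^T\mathbf{L}_{\mathbf{M}}\mathbf{Z}}$ is, up to the constant, $\hat\nu^2(\mathbf{Z},\mathbf{M})$ read through the Lemma, and one needs a Cauchy--Schwarz-type step $\hat\nu^2(\mathbf{Z},\mathbf{M})^2 \le \hat\nu^2(\mathbf{Z},\mathbf{Z})\,\hat\nu^2(\mathbf{M},\mathbf{M})$ combined with the normalization $\hat\nu^2(\mathbf{M},\mathbf{M})\ge$ a constant enforced by the rescaling step. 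Making this inequality tight at $\mathbf{M}=\mathbf{Z}$ (so that (1) holds) while keeping it valid for all admissible $\mathbf{Z}$ after rescaling is the delicate point; I would handle it by first proving the Cauchy--Schwarz inequality for the bilinear form $(\mathbf{A},\mathbf{B})\mapsto\Trace{\mathbf{A}^T\mathbf{L}_{\mathbf{B}}\mathbf{A}}$ using its interpretation as a sum $\sum_{i,j}[\widehat{\mathbf{E}}_{\mathbf{A}}]_{ij}[\widehat{\mathbf{E}}_{\mathbf{B}}]_{ij}$ of entrywise products of PSD (Gram-type) matrices, and then absorbing the normalization into the explicit choice of $c$ at initialization and the explicit rescaling factor $\kappa$ at each step.
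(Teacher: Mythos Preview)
Your overall architecture matches the paper: you correctly identify the three G-MM conditions (touching at $\mathbf{Z}_0$, ascent by maximization, and the lower-bound $f(\mathbf{Z}_{k+1})\ge g(\mathbf{Z}_{k+1},\mathbf{Z}_k)$), and your treatment of conditions (1) and (2) and of the scale-invariance is essentially the paper's.

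The genuine gap is in condition (3). You reduce it, correctly, to
\[
\Trace{\mathbf{Z}_{k+1}^T\mathbf{L}_{\mathbf{Z}_k}\mathbf{Z}_{k+1}}\;\ge\;\sqrt{\Trace{\mathbf{Z}_{k+1}^T\mathbf{L}_{\mathbf{Z}_{k+1}}\mathbf{Z}_{k+1}}},
\]
but then propose to obtain the needed comparison via a Cauchy--Schwarz inequality for the bilinear form $(\mathbf{A},\mathbf{B})\mapsto\sum_{i,j}[\widehat{\mathbf{E}}_{\mathbf{A}}]_{ij}[\widehat{\mathbf{E}}_{\mathbf{B}}]_{ij}$. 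That inequality goes the wrong way: Cauchy--Schwarz gives an \emph{upper} bound on the cross term $\Trace{\mathbf{Z}^T\mathbf{L}_{\mathbf{M}}\mathbf{Z}}$, whereas here you need a \emph{lower} bound. There is no generic lower bound of this type for arbitrary column-centered $\mathbf{Z}$ and $\mathbf{M}$, so your route cannot close the argument.

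The paper's mechanism is entirely different and algorithm-specific. It uses that $\mathbf{Z}_{k+1}$ is produced from $\mathbf{Z}_k$ by the particular fixed-point iterate $\mathbf{Z}_{t+1}=\mathbf{H}\mathbf{Z}_t$ of Section~\ref{sec:theory}, with $\lambda_{\max}(\mathbf{H})\le 1$ (Theorem~\ref{thm:fixedpointiterate}). Since for column-centered matrices $\mathbf{L}_{\mathbf{Z}}=2\mathbf{Z}\mathbf{Z}^T$, one gets $\mathbf{L}_{\mathbf{Z}_{t+1}}=\mathbf{H}\mathbf{L}_{\mathbf{Z}_t}\mathbf{H}^T$, and replacing the eigenvalues of $\mathbf{H}$ by $1$ yields the key inequality of Lemma~\ref{thm:indlemma},
\[
\Trace{\mathbf{Z}_{k+1}^T\mathbf{L}_{\mathbf{Z}_{k+1}}\mathbf{Z}_{k+1}}\;\le\;\Trace{\mathbf{Z}_{k+1}^T\mathbf{L}_{\mathbf{Z}_k}\mathbf{Z}_{k+1}}.
\]
Only then does the rescaling step (forcing the left-hand trace to be $\ge 1$, so that $\sqrt{a}\le a$) finish the job. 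In short: condition (3) is not a property of $f$ and $g$ in isolation but of the specific inner solver, and the missing ingredient in your proposal is Lemma~\ref{thm:indlemma} together with the spectral-radius bound on $\mathbf{H}$.
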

\begin{proof}
 For convergence it is enough for us to show the following, \citep{parizi2015generalized}:
\begin{enumerate}
	\item \label{enum:connectingthm:equality} $f(\mathbf{Z}_0) = g(\mathbf{Z}_0, \mathbf{Z}_0)$ for $\mathbf{Z}_{0} = \left[ c\mathbf{J}_d,  \mathbf{0}_{(n-d) \times d}^T \right]^T$ and $c=\frac{1}{\sqrt[4]{2(d-1)}}$,
	\item \label{enum:connectingthm:optim} $g(\mathbf{Z}_{k+1} , \mathbf{Z}_k) \geq g(\mathbf{Z}_{k} , \mathbf{Z}_k)$ and,
	\item \label{enum:connectingthm:max} $f(\mathbf{Z}_{k+1}) \geq g(\mathbf{Z}_{k+1} , \mathbf{Z}_k)$
\end{enumerate}
To prove statement~\ref{enum:connectingthm:equality}, for $\mathbf{Z}_{0} = \left[ c\mathbf{J}_d,  \mathbf{0}_{(n-d) \times d}^T \right]^T$, we observe that $\mathbf{Z}_{0}$ column-centered, $\mathbf{L}_{\mathbf{Z}_{0}} = 2\mathbf{Z}_{0}\mathbf{Z}_{0}^T$ and $\mathbf{Z}_{0}^T\mathbf{Z}_{0} = c^2\mathbf{J}_{d}$. Hence we get $\Trace{\mathbf{Z}_{0}^T\mathbf{Z}_{0}\mathbf{L}_{\mathbf{Z}_{0}}\mathbf{Z}_{0}}=c^4\Trace{2\mathbf{J}_{d}}=c^42(d-1)=1$. This proves the required statement $f(\mathbf{Z}_0) = g(\mathbf{Z}_0, \mathbf{Z}_0)=\Trace{\mathbf{Z}^T_0\mathbf{L}_{\mathbf{Z}_0}\mathbf{Z}^T_0}$. 

Statement~\ref{enum:connectingthm:optim} follows from the optimization $\mathbf{Z}_{k+1} = \arg \max g(\mathbf{Z} , \mathbf{Z}_k)$. To prove statement~\ref{enum:connectingthm:max} we have to show that
\[
	\frac{\Trace{\mathbf{Z}_{k+1}^T\mathbf{S}_{\mathbf{X},\mathbf{y}}\mathbf{Z}_{k+1}}}
	{\sqrt{\Trace{\mathbf{Z}_{k+1}^T\mathbf{L}_{\mathbf{Z}_{k+1}}\mathbf{Z}_{k+1}}}}\geq 
	\frac{\Trace{\mathbf{Z}_{k+1}^T\mathbf{S}_{\mathbf{X},\mathbf{y}}\mathbf{Z}_{k+1}}}
	{\Trace{\mathbf{Z}_{k+1}^T\mathbf{L}_{\mathbf{Z}_k}\mathbf{Z}_{k+1}}}.
\]
Since numerators on both sides are equal, it is enough for us to show that
\[
    \sqrt{\Trace{\mathbf{Z}_{k+1}^T\mathbf{L}_{\mathbf{Z}_{k+1}}\mathbf{Z}_{k+1}}} \leq \Trace{\mathbf{Z}_{k+1}^T\mathbf{L}_{\mathbf{Z}_k}\mathbf{Z}_{k+1}}.
\]
Now from Lemma~\ref{thm:indlemma} we have $\Trace{\mathbf{Z}_{k+1}^T\mathbf{L}_{\mathbf{Z}_{k+1}}\mathbf{Z}_{k+1}} \leq \Trace{\mathbf{Z}_{k+1}^T\mathbf{L}_{\mathbf{Z}_k}\mathbf{Z}_{k+1}}$. It follows from the rescaling step (step c) of the optimization strategy that the left hand side $\Trace{\mathbf{Z}_{t+1}\mathbf{L}_{\mathbf{Z}_{t+1}}\mathbf{Z}_{t+1}}$ is always greater that one, and so taking square root of it implies
	$\sqrt{\Trace{\mathbf{Z}_{t+1}\mathbf{L}_{\mathbf{Z}_{t+1}}\mathbf{Z}_{t+1}}}
	\leq  \Trace{\mathbf{Z}_{t+1}^T\mathbf{L}_{\mathbf{Z}_{t}}\mathbf{Z}_{t+1}}$.
\end{proof}
We summarize all of the above steps in Algorithm~\ref{alg:metadiscomax} below and section~\ref{sec:theory} further describes optimization algorithm to solve \textbf{Problem (Q)} required by it.
\begin{algorithm}[!htp]
	\caption{DISCOMAX}
	\label{alg:metadiscomax}
	\begin{algorithmic}[1]
		\Require  Initialize $\mathbf{Z}_{0} = \left[ c\mathbf{J}_d,  \mathbf{0}_{(n-d) \times d}^T \right]^T$, a column-centered matrix where $c=\frac{1}{\sqrt[4]{2(d-1)}}$, $k \gets 0$
		\Ensure  $\mathbf{Z}^* = \arg \max_{\mathbf{Z}} f(\mathbf{Z})$
		\Repeat
		\State{Solve, 	
			\[ 
				\mathbf{Z}_{k+1} = \arg \max_{\mathbf{Z}} g(\mathbf{Z}, \mathbf{Z}_{k}) \hspace{1cm} \text{\textbf{Problem (Q)}}
			\]}  \label{alg:line:goptim}
		\State {Rescale $\mathbf{Z}_{k+1} \gets \kappa \mathbf{Z}_{k+1}$ such that $\Trace{\mathbf{Z}_{k+1}^T\mathbf{L}_{\mathbf{Z}_{k+1}}\mathbf{Z}_{k+1}} \geq 1$} \label{alg:line:invariant}
		\State $k=k+1$
		\Until{ $\NormS{\mathbf{Z}_{k+1} - \mathbf{Z}_{k}} < \epsilon$ }
		\State $\mathbf{Z}^* = \mathbf{Z}_{k+1}$
		\State \Return $\mathbf{Z}^*$
	\end{algorithmic}
\end{algorithm}

\section{Optimization}
\label{sec:theory}
%%%%
In this section, we propose a framework for optimizing the surrogate objective $g(\mathbf{Z},\mathbf{M})$, referred to as \textbf{Problem (Q)}, for a fixed $\mathbf{M}=\mathbf{Z}_k$. We observe that for a given value of $\mathbf{M}$, $g(\mathbf{Z},\mathbf{M})$ is a ratio of two convex functions. To solve this, we convert this maximization problem to an equivalent minimization problem $h(\mathbf{Z},\mathbf{M})$, by taking its reciprocal \citep{Schaible:1976}. This allows us to utilize the Quadratic Fractional Programming Problem (QFPP) framework of \citet{Dinkelbach:1967kz} and \citet{zhang2008quadratic} to minimize $h(\mathbf{Z},\mathbf{M})$. We refer to this new minimization problem as \textbf{Problem~(R)}. It is stated below.
\begin{align}
	\min_{\mathbf{Z}} \hspace{1cm}
	h(\mathbf{Z}, \mathbf{M}) 
	= \frac{
		\Trace{
		    \mathbf{Z}^T\mathbf{L}_{\mathbf{M}}\mathbf{Z}
		}
	}{
		\Trace{
		    \mathbf{Z}^T\mathbf{S}_{\mathbf{X},\mathbf{y}}\mathbf{Z}
		}
	} & & \textbf{Problem (R)}
\label{eqn:surrogateminfunction}
\end{align}
where $\mathbf{M}=\mathbf{Z}_k$.

In his seminal work \cite{Dinkelbach:1967kz} and later \citet{zhang2008quadratic} proposed a novel framework to solve constrained QFP problems by converting it to an equivalent parametric optimization problem, by introducing a scalar parameter $\alpha \in \mathbb{R}$. We utilize this equivalence proposed to defined new parametric problem, \textbf{Problem (S)}. The solution involves a search over the scalar parameter $\alpha$ while repeatedly solving  \textbf{Problem (S)} to get the required solution $\mathbf{Z}_{k+1}$. This search process continues until values of $\alpha$ converge.

In a nutshell, \citet{Dinkelbach:1967kz} and \citet{zhang2008quadratic} frameworks suggest the following optimizations are equivalent:
\begin{center}
    \begin{tabular}{ccc}
        \begin{tabular}{c}
        	\toprule
        	\textbf{Problem (R)} \\
        	\midrule
        	$\underset{\mathbf{z} \in \mathbb{R}^d}{ \text{minimize}} $ 
        	$ h(\mathbf{z})  = \frac { f_1(\mathbf{z}) }{ f_2(\mathbf{z}) }  $ \\
        	\bottomrule
        \end{tabular} & 
        & \scalebox{1.5}{$\iff$}
        \begin{tabular}{c}
        	\toprule
        	\textbf{Problem (S)} \\
        	\midrule
        	$\underset{\mathbf{z} \in \mathbb{R}^d}{ \text{minimize}} $ 
        	$ H(\mathbf{z}; \alpha^*) =  f_1(\mathbf{z}) - \alpha^* f_2(\mathbf{z})$ \\
        	for some $\alpha^* \in \mathbb{R}$ \\
        	\bottomrule
        \end{tabular}
    \end{tabular}
\end{center}
where
$ f_i(\mathbf{z}) := \mathbf{z}_i^T\mathbf{A}_i\mathbf{z}-2\mathbf{b}_i\mathbf{z}+c_i$ with $\mathbf{A}_1,\mathbf{A}_2 \in \mathbb{R}^{n \times n}$,
$\mathbf{b}_1,\mathbf{b}_2 \in \mathbf{R}^{n}$, and $c_1,c_2 \in \mathbb{R}$. $\mathbf{A}_1$ and $\mathbf{A}_2$ are symmetric with $f_2(\mathbf{x}) > 0 $ over some $\mathbf{z} \in \mathcal{Z}$. 

To see the equivalence of $h(\mathbf{Z}, \mathbf{M})$ in \textbf{Problem (R)} to $h(\mathbf{z})$ above we observe that: $\mathbf{A}_1 = \mathbf{I}_n \otimes  \mathbf{L}_{\mathbf{M}}$, 
$\mathbf{A}_2 = \mathbf{I}_n \otimes  \mathbf{S}_{\mathbf{X},\mathbf{y}}$, $c_i=c_2=0$, and $\mathbf{b}_1=\mathbf{b}_2=\mathbf{0}$. Also, due to positive definiteness of $\mathbf{A}_i$, $f_i(\mathbf{z})$ is positive\footnote{In case of $\mathbf{A}_i$ is semi-definite we regularize by adding $\mathbf{A}_i+\epsilon\mathbf{I}$ so that $\mathbf{A}_i\succ 0$}, and $f(\mathbf{z}_i)>0$. Using this setup for $h(\mathbf{Z}, \mathbf{M})$ we get,\footnote{$\otimes$ indicates kronecker product. $\vectorize{\mathbf{Z}}$ denotes column vectorization of matrix $\mathbf{Z}$.}
\begin{align}
	\min_{\mathbf{Z}} \hspace{1cm}
	h(\mathbf{Z}, \mathbf{M})
			= \frac { 
				\vectorize{\mathbf{Z}}^T(\mathbf{I}_n \otimes \mathbf{L}_{\mathbf{M}})\vectorize{\mathbf{Z}}
			}{
				\vectorize{\mathbf{Z}}^T(\mathbf{I}_n \otimes \mathbf{S}_{\mathbf{X},\mathbf{y}} ) \vectorize{\mathbf{Z}}
			}
	\label{eqn2}
\end{align}

In subsection~\ref{subsec:gss} we propose a Golden Section Search \citep{kiefer1953sequential} based algorithm (Algorithm~\ref{alg:gsdiscomax}) which utilizes concavity property of $ H(\mathbf{Z}; \alpha)$ with respect to $\alpha$ to locate the best $\alpha^*$. During this search we repeatedly solve \textbf{Problem (S)} starting with an intial interval $0=\alpha_l \leq \alpha \leq  \alpha_u = \lambda_{min}(\mathbf{L}_{\mathbf{M}},\mathbf{S}_{\mathbf{X},\mathbf{y}})$ for a fixed $\mathbf{M}$, then at each step shorten the search interval by moving upper and lower limits closer to each other. We continue until convergence to $\alpha^*$. The choice of the upper limit of $\alpha_u = \lambda_{min}(\mathbf{L}_{\mathbf{M}},\mathbf{S}_{\mathbf{X},\mathbf{y}})$ is motivated by proof of Lemma~\ref{thm:alpha}.

To solve \textbf{Problem (S)} for a given $\alpha$, we propose an iterative algorithm in subsection~\ref{subsec:discomax}  (Algorithm \ref{alg:discomax}). It uses the classical Majorization-Minimization framework of \cite{Lange:2013}.

\subsection{Golden Section Search}
\label{subsec:gss}
\cite{Dinkelbach:1967kz} and \cite{zhang2008quadratic} showed the following properties of the objective\footnote{For a fixed $\mathbf{Z}$ and variable argument $\alpha$ we denote $H(\mathbf{Z}; \alpha)$ as $H(\alpha)$.} $H(\alpha)$ with respect to $\alpha$, for a fixed $\mathbf{Z}$.
\begin{theorem}
	Let $G \colon \mathbb{R} \rightarrow \mathbb{R}$ be defined as \[ G(\alpha)= \min_{\mathbf{Z}} H(\mathbf{Z}; \alpha) = \min_{\mathbf{Z}}\left\{\Trace{\mathbf{Z}^{T}\mathbf{L}_{\mathbf{M}}\mathbf{Z}}-\alpha \Trace{\mathbf{Z}^{T}\mathbf{S}_{\mathbf{X},\mathbf{y}}\mathbf{Z}} \right\}\] as derived from \textbf{Problem (S)}, then following statements hold true.
	\begin{enumerate}
		\item $G$ is continuous at any $\alpha \in \mathbb{R}$.
		\item $G$ is concave over $\alpha \in \mathbb{R}$.
		\item $G(\alpha) = 0$, has a unique solution $\alpha^*$.
	\end{enumerate}
	\label{thm:zhang}
\end{theorem}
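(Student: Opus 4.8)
The three assertions are precisely the structural facts underpinning the Dinkelbach--Zhang parametric scheme, and the plan is to establish them in the order (2), then (1), then (3), since continuity is nearly free once concavity is in hand. First I would fix a normalization: the ratio minimized in \textbf{Problem (R)} is invariant under $\mathbf{Z}\mapsto\kappa\mathbf{Z}$, so the minimization defining $G$ is to be read over a compact cross-section $\mathcal{Z}$ of the nonzero matrices --- concretely $\mathcal{Z}=\{\mathbf{Z}:\|\mathbf{Z}\|_F=1\}$, intersected with the column-centered subspace if required --- which plays the role of the set $\mathcal{Z}$ in the Dinkelbach template quoted above. After the $\epsilon$-regularization of the footnote we may take $\mathbf{L}_{\mathbf{M}}\succ 0$ and $\mathbf{S}_{\mathbf{X},\mathbf{y}}\succ 0$; writing $f_1(\mathbf{Z})=\Trace{\mathbf{Z}^{T}\mathbf{L}_{\mathbf{M}}\mathbf{Z}}$ and $f_2(\mathbf{Z})=\Trace{\mathbf{Z}^{T}\mathbf{S}_{\mathbf{X},\mathbf{y}}\mathbf{Z}}$, one then has $f_2>0$ on $\mathcal{Z}$, each map $\alpha\mapsto H(\mathbf{Z};\alpha)=f_1(\mathbf{Z})-\alpha f_2(\mathbf{Z})$ is affine with strictly negative slope, and (taking $\mathcal{Z}$ the Frobenius unit sphere and using $\lambda_{min}(\mathbf{I}_n\otimes A)=\lambda_{min}(A)$) one gets the convenient closed form $G(\alpha)=\lambda_{min}(\mathbf{L}_{\mathbf{M}}-\alpha\mathbf{S}_{\mathbf{X},\mathbf{y}})$.

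For statement (2), $G$ is a pointwise infimum over $\mathbf{Z}\in\mathcal{Z}$ of a family of functions that are affine in $\alpha$, hence concave on $\mathbb{R}$; equivalently $G(\alpha)=\lambda_{min}(\mathbf{L}_{\mathbf{M}}-\alpha\mathbf{S}_{\mathbf{X},\mathbf{y}})$ is the composition of the concave map $A\mapsto\lambda_{min}(A)$ on symmetric matrices with an affine map of $\alpha$. Compactness of $\mathcal{Z}$ guarantees the minimum is attained and $G$ is finite at every $\alpha$. For statement (1), a finite concave function on the open interval $\mathbb{R}$ is automatically continuous, so (1) follows from (2); alternatively, since the slopes $f_2(\mathbf{Z})$ range over a bounded set on $\mathcal{Z}$, the family $\{H(\mathbf{Z};\cdot)\}_{\mathbf{Z}\in\mathcal{Z}}$ is uniformly Lipschitz in $\alpha$, and hence so is $G$.

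For statement (3), I would show that $G$ is strictly decreasing and changes sign. If $\alpha_1<\alpha_2$ and $\mathbf{Z}_2\in\mathcal{Z}$ attains $G(\alpha_2)$, then
\[
G(\alpha_1)\le H(\mathbf{Z}_2;\alpha_1)=G(\alpha_2)+(\alpha_2-\alpha_1)f_2(\mathbf{Z}_2)>G(\alpha_2),
\]
since $f_2(\mathbf{Z}_2)>0$ (this is also Weyl's monotonicity applied to $\mathbf{L}_{\mathbf{M}}-\alpha_1\mathbf{S}_{\mathbf{X},\mathbf{y}}=(\mathbf{L}_{\mathbf{M}}-\alpha_2\mathbf{S}_{\mathbf{X},\mathbf{y}})+(\alpha_2-\alpha_1)\mathbf{S}_{\mathbf{X},\mathbf{y}}$, the added term being $\succ 0$). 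For the sign change, $\mathbf{L}_{\mathbf{M}}\succ 0$ gives $G(0)>0$, while for any $\alpha>\lambda_{max}(\mathbf{L}_{\mathbf{M}},\mathbf{S}_{\mathbf{X},\mathbf{y}})$ one has $\mathbf{L}_{\mathbf{M}}-\alpha\mathbf{S}_{\mathbf{X},\mathbf{y}}\prec 0$, hence $H(\mathbf{Z};\alpha)<0$ for all $\mathbf{Z}\in\mathcal{Z}$ and $G(\alpha)<0$. Continuity and the intermediate value theorem then yield a zero of $G$, and strict monotonicity makes it unique; this $\alpha^*$ is in fact $\lambda_{min}(\mathbf{L}_{\mathbf{M}},\mathbf{S}_{\mathbf{X},\mathbf{y}})$, which explains the choice of that value as the upper search limit $\alpha_u$ in Section~\ref{subsec:gss}.

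The argument needs only elementary convex analysis together with Weyl-type eigenvalue monotonicity, so there is no hard technical kernel; the one point requiring genuine care is the domain of the minimization defining $G$. If it is read as an unconstrained minimization over all $\mathbf{Z}$, the minimizer is $\mathbf{Z}=\mathbf{0}$ whenever $\mathbf{L}_{\mathbf{M}}-\alpha\mathbf{S}_{\mathbf{X},\mathbf{y}}\succeq 0$ and the infimum is $-\infty$ otherwise, so $G$ would be identically $0$ on a ray and $-\infty$ beyond it, making (1) and (3) false. The statements hold exactly because $G$ is the reduced objective of \textbf{Problem (R)} over the scale-normalized feasible set $\mathcal{Z}$ on which $f_2>0$; making that restriction explicit up front is the only subtlety, after which every step above is routine.
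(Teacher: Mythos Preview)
The paper does not actually prove this theorem; it attributes the three properties to \citet{Dinkelbach:1967kz} and \citet{zhang2008quadratic} and states them without argument. Your proposal supplies what is essentially the standard Dinkelbach argument, and it is correct in outline: concavity from a pointwise infimum of affine functions, continuity from finiteness of a concave function on $\mathbb{R}$, and uniqueness of the zero from strict monotonicity plus a sign change. Your remark that the minimization defining $G$ must be read over a scale-normalized compact set $\mathcal{Z}$ with $f_2>0$ (else $G$ collapses to $0$ or $-\infty$) is exactly right and is the only substantive point the paper leaves implicit.

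One small slip to repair: in the strict-monotonicity step you let $\mathbf{Z}_2$ attain $G(\alpha_2)$ and write
\[
G(\alpha_1)\le H(\mathbf{Z}_2;\alpha_1)=G(\alpha_2)+(\alpha_2-\alpha_1)f_2(\mathbf{Z}_2)>G(\alpha_2),
\]
but an upper bound on $G(\alpha_1)$ cannot yield $G(\alpha_1)>G(\alpha_2)$. Swap the roles: take $\mathbf{Z}_1$ attaining $G(\alpha_1)$ and bound
\[
G(\alpha_2)\le H(\mathbf{Z}_1;\alpha_2)=G(\alpha_1)-(\alpha_2-\alpha_1)f_2(\mathbf{Z}_1)<G(\alpha_1).
\]
Your Weyl-monotonicity alternative is already correct and would suffice on its own.
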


Algorithm~\ref{alg:gsdiscomax} exploits the concavity property of $G(\alpha)$ to perform a Golden Section Search over $\alpha$. Subsection~\ref{subsec:discomax} provides an iterative Majorization-Minimization algorithm (Algorithm~\ref{alg:discomax}) to solve this minimization problem \textbf{Problem (S)}.
\begin{algorithm}[ht]
\caption{Golden Section Search for $\alpha \in [\alpha_l, \alpha_u]$ for a fixed $\mathbf{M}=\mathbf{Z}_k$.}
\label{alg:gsdiscomax}
		\begin{algorithmic}[1]
			\Require $\epsilon$, $\eta = \frac{1+\sqrt{5}}{2}$, $\alpha_l=0$,  $\mathbf{S}_{\mathbf{X},\mathbf{y}},\mathbf{L}_{\mathbf{X}}$, $\mathbf{L}_{\mathbf{y}}$, $\mathbf{M} = \mathbf{Z}_k$.
			\Ensure $\mathbf{Z}_{k+1} = \arg\min_{\mathbf{Z}} g(\mathbf{Z},\mathbf{Z}_{k+1})$
			\State $\mathbf{D}_X  \gets \diag(\mathbf{L}_\mathbf{X})$
			\State $\mathbf{L}_{\mathbf{M}} \gets 2\mathbf{M}^T\mathbf{M}$
			\State $\alpha_u \gets \lambda_{max}(\mathbf{L}_{\mathbf{M}},\mathbf{S}_{\mathbf{X},\mathbf{y}})$ (Lemma~\ref{thm:gamma})
			\State $\beta \gets \alpha_u + \eta (\alpha_l-\alpha_u)$
			\State $\delta \gets \alpha_l + \eta (\alpha_u-\alpha_l)$
			\Repeat
			\State $H(\beta) \gets 
				\underset{\mathbf{Z} \in \mathbb{R}^d }{\text{minimize}}
				\left( 
					\Trace{
						\mathbf{Z}^{T}\mathbf{L}_{\mathbf{M}}\mathbf{Z}
						}
					-\beta \Trace{
						\mathbf{Z}^{T}\mathbf{S}_{\mathbf{X},\mathbf{y}}\mathbf{Z}
						}
				\right)$  (\textbf{Problem (S)})
			\State $H(\delta) \gets 
				\underset{\mathbf{Z} \in \mathbb{R}^d }{\text{minimize}}
				\left( 
					\Trace{
						\mathbf{Z}^{T}\mathbf{L}_{\mathbf{M}}\mathbf{Z}
						}
					-\delta \Trace{
						\mathbf{Z}^{T}\mathbf{S}_{\mathbf{X},\mathbf{y}}\mathbf{Z}
						}
				\right)$  (\textbf{Problem (S)})
			\If  {$\left(H(\beta) > H(\delta)\right)$}
				\State $\alpha_u \gets \delta$, $\delta \gets \beta$
				\State $\beta \gets \alpha_u + \eta (\alpha_l-\alpha_u)$
			\Else
				\State $\alpha_l \gets \beta$, $\beta \gets \delta$
				\State $\delta \gets \alpha_l + \eta (\alpha_u-\alpha_l)$
			\EndIf
			\Until {$(|\alpha_u-\alpha_l| < \epsilon)$}
			\State $\alpha^* \gets \frac{\alpha_u+\alpha_u}{2} $
			\State $\mathbf{Z}_{k+1} \gets
			{\arg\min_{\mathbf{Z} \in \mathbb{R}^d }}
				\left(
					\Trace{
						\mathbf{Z}^{T}\mathbf{L}_{\mathbf{M}}\mathbf{Z}
						}
					-\alpha^* \Trace{
				        \mathbf{Z}^{T}\mathbf{S}_{\mathbf{X},\mathbf{y}}\mathbf{Z}
						}
				\right)
			$ (\textbf{Problem (S)})
			\State \Return $\alpha^*$, $\mathbf{Z}_{k+1}$
		\end{algorithmic}
\end{algorithm}

\subsection{Distance Correlation Maximization Algorithm}
\label{subsec:discomax}
Algorithm~\ref{alg:discomax} gives a iterative fixed point algorithm which solves \textbf{Problem (S)}. Theorem~\ref{thm:mmobj} provides a fixed point iterate used to minimize $H(\mathbf{Z}, \alpha)$ with respect to $\mathbf{Z}$ for a given $\alpha$.  The fixed point iterate\footnote{We use the subscript $t$ to indicate fixed point iteration of $\mathbf{Z}_t$.} $\mathbf{Z}_{t+1} = \mathbf{H}\mathbf{Z}_{t}$ minimizes \textbf{Problem (S)} and a monotonic convergence is assured by the Majorization-Minimization result of \citet{Lange:2013}. Theorem~\ref{thm:mmobj} below derives the fixed point iterate used in Algorithm~\ref{alg:discomax}.
\begin{theorem}For a fixed $\gamma^2$ (Lemma~\ref{thm:gamma}), some $\alpha$ (Lemma~\ref{thm:alpha}) and \[ \mathbf{H}=\left(
			\gamma ^2\mathbf{D}_X -\alpha \mathbf{S}_{\mathbf{X},\mathbf{y}}
		\right)^{\dagger}
		(\gamma ^2\mathbf{D}_X-\mathbf{L}_{\mathbf{M}}) \]
	the iterate $\mathbf{Z}_t = \mathbf{H}\mathbf{Z}_{t-1}$ monotonically minimizes the objective,
	\begin{align}
		F(\mathbf{Z}; \alpha) = \Trace{
			\mathbf{Z}^T \mathbf{L_{\mathbf{M}}} \mathbf{Z}
			} 
		-\alpha \Trace{
			\mathbf{Z}^T\mathbf{S}_{\mathbf{X},\mathbf{y}}\mathbf{Z}
			}
	\end{align} 
	\label{thm:mmobj}
\end{theorem}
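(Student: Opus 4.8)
The plan is to realise the iterate $\mathbf{Z}_t=\mathbf{H}\mathbf{Z}_{t-1}$ as one step of a Majorization--Minimization scheme (in the sense of \citet{Lange:2013}) applied to $F(\,\cdot\,;\alpha)$: I will build a quadratic surrogate $g(\mathbf{Z}\mid\mathbf{Z}_{t-1})$ that lies above $F$ and touches it at $\mathbf{Z}_{t-1}$, and show that its unique minimizer is exactly $\mathbf{H}\mathbf{Z}_{t-1}$. Monotone decrease of $F$ then follows from the usual chain $F(\mathbf{Z}_t;\alpha)\le g(\mathbf{Z}_t\mid\mathbf{Z}_{t-1})\le g(\mathbf{Z}_{t-1}\mid\mathbf{Z}_{t-1})=F(\mathbf{Z}_{t-1};\alpha)$, and convergence of $\{\mathbf{Z}_t\}$ from \citet{Lange:2013}.

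The surrogate is built in two moves. First, using the spectral bound $\gamma^2\mathbf{D}_X\succeq\mathbf{L}_{\mathbf{M}}$ supplied by Lemma~\ref{thm:gamma}, I split the convex term as
\[
\Trace{\mathbf{Z}^T\mathbf{L}_{\mathbf{M}}\mathbf{Z}}
=\gamma^2\Trace{\mathbf{Z}^T\mathbf{D}_X\mathbf{Z}}
-\Trace{\mathbf{Z}^T(\gamma^2\mathbf{D}_X-\mathbf{L}_{\mathbf{M}})\mathbf{Z}},
\]
in which $\gamma^2\mathbf{D}_X-\mathbf{L}_{\mathbf{M}}\succeq 0$, so the last summand is concave in $\mathbf{Z}$. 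Second, I upper bound that concave term by its tangent plane at $\mathbf{Z}_{t-1}$, which is simply the inequality $\Trace{(\mathbf{Z}-\mathbf{Z}_{t-1})^T(\gamma^2\mathbf{D}_X-\mathbf{L}_{\mathbf{M}})(\mathbf{Z}-\mathbf{Z}_{t-1})}\ge 0$ rearranged. Leaving the remaining concave term $-\alpha\Trace{\mathbf{Z}^T\mathbf{S}_{\mathbf{X},\mathbf{y}}\mathbf{Z}}$ untouched, I obtain
\[
g(\mathbf{Z}\mid\mathbf{Z}_{t-1})
=\Trace{\mathbf{Z}^T(\gamma^2\mathbf{D}_X-\alpha\mathbf{S}_{\mathbf{X},\mathbf{y}})\mathbf{Z}}
-2\Trace{\mathbf{Z}^T(\gamma^2\mathbf{D}_X-\mathbf{L}_{\mathbf{M}})\mathbf{Z}_{t-1}}
+\Trace{\mathbf{Z}_{t-1}^T(\gamma^2\mathbf{D}_X-\mathbf{L}_{\mathbf{M}})\mathbf{Z}_{t-1}}.
\]
A one-line computation gives $g(\mathbf{Z}\mid\mathbf{Z}_{t-1})-F(\mathbf{Z};\alpha)=\Trace{(\mathbf{Z}-\mathbf{Z}_{t-1})^T(\gamma^2\mathbf{D}_X-\mathbf{L}_{\mathbf{M}})(\mathbf{Z}-\mathbf{Z}_{t-1})}\ge 0$, with equality at $\mathbf{Z}=\mathbf{Z}_{t-1}$, so $g$ is a legitimate majorizer.

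To finish I minimize $g(\,\cdot\mid\mathbf{Z}_{t-1})$. Its Hessian is $2(\gamma^2\mathbf{D}_X-\alpha\mathbf{S}_{\mathbf{X},\mathbf{y}})$, which is positive semidefinite: by admissibility of $\alpha$ (Lemma~\ref{thm:alpha}) one has $\alpha\mathbf{S}_{\mathbf{X},\mathbf{y}}\preceq\mathbf{L}_{\mathbf{M}}$, which together with Lemma~\ref{thm:gamma} gives $\alpha\mathbf{S}_{\mathbf{X},\mathbf{y}}\preceq\mathbf{L}_{\mathbf{M}}\preceq\gamma^2\mathbf{D}_X$ (and strictly positive definite after the $\epsilon\mathbf{I}$ regularization already invoked). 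Hence $g$ is a convex quadratic, minimized at its stationary point: setting the gradient $2(\gamma^2\mathbf{D}_X-\alpha\mathbf{S}_{\mathbf{X},\mathbf{y}})\mathbf{Z}-2(\gamma^2\mathbf{D}_X-\mathbf{L}_{\mathbf{M}})\mathbf{Z}_{t-1}$ to zero and solving gives exactly $\mathbf{Z}_t=(\gamma^2\mathbf{D}_X-\alpha\mathbf{S}_{\mathbf{X},\mathbf{y}})^{\dagger}(\gamma^2\mathbf{D}_X-\mathbf{L}_{\mathbf{M}})\mathbf{Z}_{t-1}=\mathbf{H}\mathbf{Z}_{t-1}$.

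I expect the real work to be not the algebra but the justification of the two semidefiniteness facts that make the scheme valid: $\gamma^2\mathbf{D}_X\succeq\mathbf{L}_{\mathbf{M}}$ (which makes the split-off term concave, Lemma~\ref{thm:gamma}) and $\gamma^2\mathbf{D}_X\succeq\alpha\mathbf{S}_{\mathbf{X},\mathbf{y}}$ (which makes $g$ bounded below, tied to the admissible range of $\alpha$, Lemma~\ref{thm:alpha}). One should also be careful with the pseudoinverse: since $(\gamma^2\mathbf{D}_X-\mathbf{L}_{\mathbf{M}})\mathbf{Z}_{t-1}$ lies in the column space of $\gamma^2\mathbf{D}_X-\alpha\mathbf{S}_{\mathbf{X},\mathbf{y}}$ (automatic once the regularization makes the latter invertible), $\mathbf{H}\mathbf{Z}_{t-1}$ is a genuine minimizer of $g$ and not merely a least-squares solution, and all trace rearrangements rely on symmetry of $\mathbf{D}_X$, $\mathbf{L}_{\mathbf{M}}$ and $\mathbf{S}_{\mathbf{X},\mathbf{y}}$.
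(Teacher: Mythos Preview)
Your proposal is correct and follows essentially the same approach as the paper: both construct the majorizer of $F(\mathbf{Z};\alpha)$ from the inequality $\Trace{(\mathbf{Z}-\mathbf{Z}_{t-1})^T(\gamma^2\mathbf{D}_X-\mathbf{L}_{\mathbf{M}})(\mathbf{Z}-\mathbf{Z}_{t-1})}\ge 0$ (valid by Lemma~\ref{thm:gamma}), leave the $-\alpha\Trace{\mathbf{Z}^T\mathbf{S}_{\mathbf{X},\mathbf{y}}\mathbf{Z}}$ term intact, and set the gradient of the resulting quadratic to zero to obtain the iterate $\mathbf{H}\mathbf{Z}_{t-1}$. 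Your write-up is in fact slightly more careful than the paper's, since you explicitly verify that the surrogate's Hessian $\gamma^2\mathbf{D}_X-\alpha\mathbf{S}_{\mathbf{X},\mathbf{y}}$ is positive semidefinite (via Lemma~\ref{thm:alpha} and Lemma~\ref{thm:gamma}) so that the stationary point is actually a minimizer, a point the paper leaves implicit.
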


\begin{proof}
    From Lemma \ref{thm:gamma} we know that, $(\gamma^2\mathbf{D}_{\mathbf{X}} - \mathbf{L}_{\mathbf{M}}) \succeq 0$. Hence the following would hold true for any real matrix $\mathbf{N}$,
    \[
    	\Trace{
    	    (\mathbf{Z}-\mathbf{N})^T
    	    (\gamma^2\mathbf{D}_X-\mathbf{L}_{\mathbf{M}})
    	    (\mathbf{Z}-\mathbf{N})
    	} \geq 0 
    \]
    Rearranging the terms we get the following inequality over $\Trace{\mathbf{Z}^T\mathbf{L}_{\mathbf{M}}\mathbf{Z}}$,
    \begin{align*}
        \Trace{\mathbf{Z}^T\mathbf{L}_{\mathbf{M}}\mathbf{Z}} 
        &+\Trace{
            \mathbf{N}^T(\gamma^2\mathbf{D}_X-\mathbf{L}_{\mathbf{M}})\mathbf{Z}
            }
        -\Trace{
    	    \mathbf{N}^T(\gamma^2\mathbf{D}_X-\mathbf{L}_{\mathbf{M}})\mathbf{N}
    	    } \\
        & \leq \Trace{\mathbf{Z}^T\gamma^2\mathbf{D}_X\mathbf{Z}} - \Trace{\mathbf{Z}^T(\gamma^2(\mathbf{D}_X-\mathbf{L}_{\mathbf{M}})\mathbf{N}} \\
        \Trace{\mathbf{Z}^T\mathbf{L}_{\mathbf{M}}\mathbf{Z}} 
        & \leq \Trace{\mathbf{Z}^T\gamma^2\mathbf{D}_X\mathbf{Z}} 
        - 2 \Trace{\mathbf{Z}^T(\gamma^2\mathbf{D}_X-\mathbf{L}_{\mathbf{M}})\mathbf{N}} + \Trace{
    	    \mathbf{N}^T(\gamma^2\mathbf{D}_X-\mathbf{L}_{\mathbf{M}})\mathbf{N}
    	    } \\
        & = l(\mathbf{Z},\mathbf{N})
    \end{align*}
    If $\mathbf{N}=\mathbf{Z}$ then $l(\mathbf{Z},\mathbf{Z})= \Trace{\mathbf{Z}^T\mathbf{L}_{\mathbf{M}}\mathbf{Z}}$. Hence $l(\mathbf{Z},\mathbf{N})$ majorizes $\Trace{\mathbf{Z}^T\mathbf{L}_{\mathbf{M}}\mathbf{Z}}$.  It also follows that the surrogate function $l(\mathbf{Z},\mathbf{N}) -\alpha \Trace{\mathbf{Z}^T\mathbf{S}_{\mathbf{X},\mathbf{y}}\mathbf{Z}}$ majorizes our desired objective function $H(\mathbf{Z}; \alpha)$. To optimize this surrogate loss we equate its gradient to zero and rearrange the terms to obtain
    \begin{align*}
    	(\gamma^2\mathbf{D}_X - \alpha \mathbf{S}_{\mathbf{X},\mathbf{y}}) \mathbf{Z} 
    	& = ({\gamma^2}\mathbf{D}_X - \mathbf{L}_{\mathbf{M}} )\mathbf{N} \\
    	\mathbf{Z} & = (\gamma^2\mathbf{D}_X - \alpha \mathbf{S}_{\mathbf{X},\mathbf{y}})^{\dagger}
    	({\gamma^2}\mathbf{D}_X - \mathbf{L}_{\mathbf{M}})\mathbf{N},
    \end{align*}
    which gives us the update equation ${\mathbf{Z}}_{t+1} = \mathbf{H}{\mathbf{Z}}_{t}$ where $\mathbf{H}$ is given by,
	\begin{align} 
		\mathbf{H} = 
			(\gamma ^2\mathbf{D}_X -\alpha \mathbf{S}_{\mathbf{X},\mathbf{y}})^{\dagger} 
			(\gamma ^2\mathbf{D}_X -\mathbf{L}_{\mathbf{M}}).
	\end{align}
	Hence it follows from framework of \citet{Lange:2013} that above update equation monotonically minimizes $H(\mathbf{Z}; \alpha)$.
\end{proof}

Algorithm~\ref{alg:discomax} summarizes the steps of an iterative Majorization-Minimization approach to solve \textbf{Problem (S)}.
\begin{algorithm}[!htp]
\caption{Distance Correlation Maximization for a given $\alpha$}
\label{alg:discomax}
		\begin{algorithmic}[1]
			\Require $\gamma^2$ (Theorem \ref{thm:gamma}), $\alpha$, $\mathbf{M}=\mathbf{Z}_k$, $\mathbf{S}_{\mathbf{X},\mathbf{y}}$, $\mathbf{L}_{\mathbf{M}}$, $\mathbf{D}_\mathbf{X}$
			\Ensure $
				H(\mathbf{Z}; \alpha) 
				= \underset{\mathbf{Z} \in \mathbb{R}^d }{\text{minimize}}
				\left( 
					\Trace{
						\mathbf{Z}^{T}\mathbf{L}_{\mathbf{M}}\mathbf{Z}
						}
					-\alpha \Trace{
						\mathbf{Z}^{T}\mathbf{S}_{\mathbf{X},\mathbf{y}}\mathbf{Z}
						}
				\right)
			$
			\State $t \gets 0$
			\State $\mathbf{Z}_t = \mathbf{Z}_k$
			\State $H(\mathbf{Z}_t; \alpha) \gets \left( 
					\Trace{ \mathbf{Z}^{T}_t\mathbf{L}_{\mathbf{M}}\mathbf{Z}_t}
					-\alpha \Trace{\mathbf{Z}^{T}_t\mathbf{S}_{\mathbf{X},\mathbf{y}}\mathbf{Z}_t}
				\right)$
			\State $\mathbf{H} =  \left( \gamma^2\mathbf{D}_{\mathbf{X}}
						-\alpha \mathbf{S}_{\mathbf{X},\mathbf{y}} \right)^{\dagger} 
						\left( 
							\gamma^2\mathbf{D}_{\mathbf{X}} -\mathbf{L}_{\mathbf{M}}
						\right)$
			\Repeat
				\State $\mathbf{Z}_{t+1} = \mathbf{H}\mathbf{Z}_{t}$
				\State $H({\mathbf{Z}_{t+1}}; \alpha) \gets 
				\left( 
					\Trace{ \mathbf{Z}^{T}_t\mathbf{L}_{\mathbf{M}}\mathbf{Z}_t}
					-\alpha \Trace{\mathbf{Z}^{T}_t\mathbf{S}_{\mathbf{X},\mathbf{y}}\mathbf{Z}_t}
				\right)$
				\State $t \gets t +1$
			\Until {$(\vert H({\mathbf{Z}_{t+1}}; \alpha)-H({\mathbf{Z}_t; \alpha)} \vert  < \epsilon)$ \textbf{or} $(t \geq T_{\max})$}
			\State $F(\alpha) \gets H(\mathbf{Z}_{t}; \alpha) $
			\State $\mathbf{Z}^* \gets \mathbf{Z}_{t}$
		    \State \Return $F(\alpha), \mathbf{Z}^*$
		\end{algorithmic}
\end{algorithm}

\section{Experiments}
\label{sec:experiments}
In this section we present experimental results that compare our proposed method with several state-of-the-art supervised dimensionality reduction techniques on a regression task.

\subsection{Methodology}
\label{subsec:methodology}
\begin{wrapfigure}{r}{0.4\textwidth}
    \centering
     \includegraphics[width=0.45\textwidth]{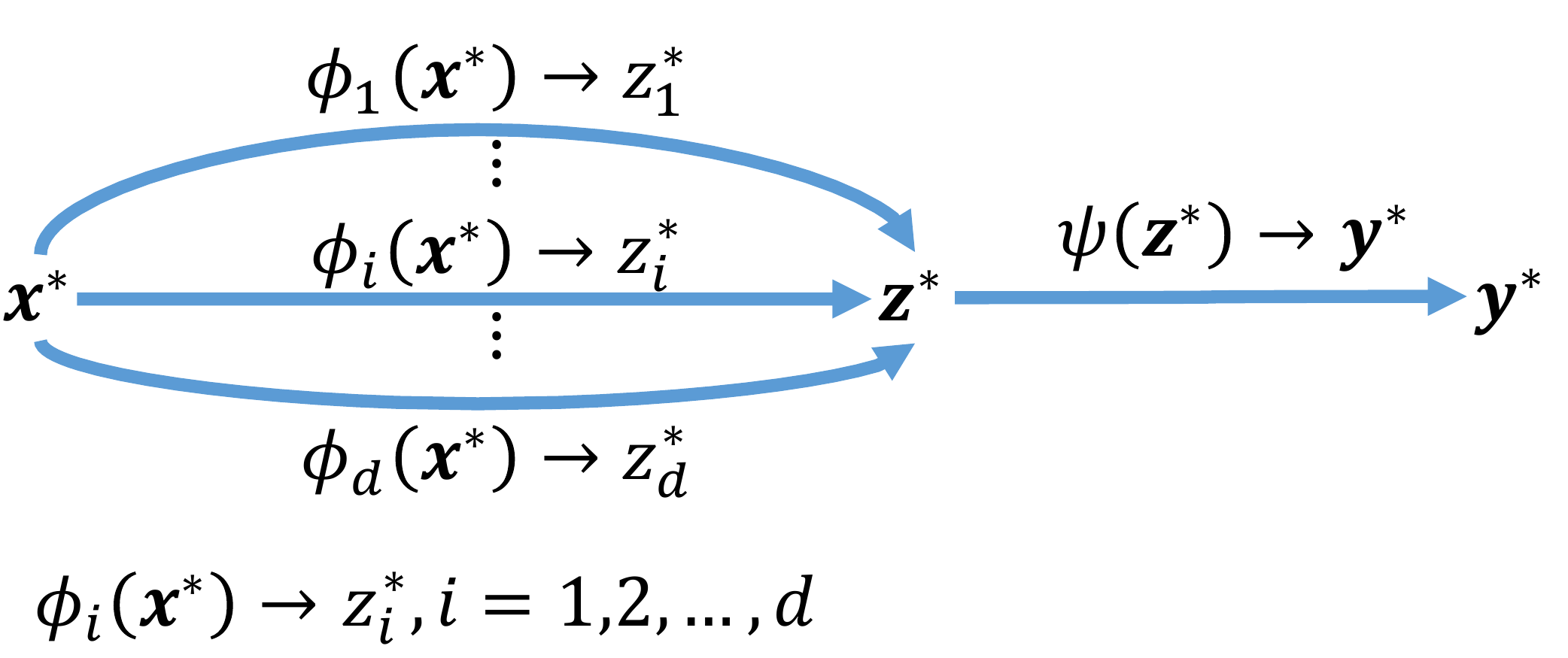}
    \caption{Out-of-Sample prediction}
    \label{figure:block}
\end{wrapfigure}
Methodology we use for our experiments is as follows:
\begin{enumerate}[(i)]
    \item We run our proposed algorithm on the training set $\mathbf{X}_{\texttt{Train}}$ to learn low-dimensional features $\mathbf{Z}_{\texttt{Train}}$.
    \item We learn the map $\psi \colon \mathbf{z} \mapsto y$ using Support Vector Regression on $\mathbf{Z}_{\texttt{Train}}$ and $\mathbf{Y}_{\texttt{Train}}$.
    \item We learn mappings $\phi_i \colon \mathbf{x} \mapsto z_i$, $i=1$ to $d$ for each dimension of $\mathbf{z}$ using Support Vector Regression on $\mathbf{X}_{\texttt{Train}}$ and $\mathbf{Z}_{\texttt{Train}}$.
  \end{enumerate}
 During testing/out-of-sample phase, given a test input $\mathbf{x}^*$, we use maps $\phi_i \colon \mathbf{x} \mapsto z_i$ for $i= 1$ to $d$ and generate $\mathbf{z}^*$. We then utilize maps $\psi \colon \mathbf{z} \mapsto y$ on $\mathbf{z}^*$ to get the predicted response $y^*$. Figure \ref{figure:block} illustrates the testing phase of our methodology.

\subsection{Datasets}
\label{subsec:datasets}
In our results we report the Root Mean Squared (RMS) errors on five datasets from the UCI-Machine Learning Repository \citep{Lichman:2013}  in \Cref{tab:boston,tab:originofmusic,tab:blogger,tab:ctslices,tab:indoorlocalize}. We use the following datasets in our experiments.
\begin{enumerate}[(a)]
    \item \textbf{Boston Housing} \citep{harrison1978hedonic}: This dataset contains information collected by the U.S Census Service concerning housing in the area of Boston Mass. This dataset has been used extensively throughout the vast regression literature to benchmark algorithms. The response variable to be predicted is the median value of owner-occupied homes.
    \item \textbf{Relative Location of Computed Tomography (CT) Slices} \citep{graf20112d}: This dataset consists of 385 features extracted from computed tomography (CT) images. Each CT slice is described by two histograms in polar space that are concatenated to form the final feature vector. The response variable to be predicted is the relative location of an image on the axial axis. The ground truth of responses in this dataset was constructed by manually annotating up to 10 distinct landmarks in each CT Volume with a known location. This response takes values in the range $[0,180]$ where $0$ denotes the top of the head and $180$ denotes the the soles of the feet. 
    \item \textbf{BlogFeedback}
    \citep{buza2014feedback}: This dataset originates from a set of raw HTML documents of blog posts that were crawled and processed. The task associated with this data is to predict the number of comments in the upcoming 24 hours. In order to simulate this situation, the dataset was curated by choosing a base time (in the past) and selecting the blog posts that were published at most 72 hours before the selected base date/time. Then a set of 281 features of the selected blog posts were computed from the information that was available at the basetime. The target is to predict the number of comments that the blog post received in the next 24 hours, relative to the basetime. In the training data, the base times were in the years 2010 and 2011. In the test data the base times were in February and March 2012. 
    \item \textbf{Geographical Origin of Music} \citep{zhou2014predicting}: Instances in this dataset contain audio features extracted from 1059 wave files covering 33 countries/areas. The task associated with the data is to predict the geographical origin of music. The program MARSYAS was used to extract 68 audio features from the wave files. These were appended with 48 chromatic attributes that describe the notes of the scale bringing the total number of features to 116.
    \item \textbf{UJI Indoor Localization} \citep{torres2014ujiindoorloc}: The UJIIndoorLoc is a Multi-Building Multi-Floor indoor localization database that relies on WLAN/WiFi fingerprinting technology. Automatic user localization consists of estimating the position of the user (latitude, longitude and altitude) by using an electronic device, usually a mobile phone. The task is to predict the actual longitude and latitude. The database consists of 19937 training/reference records and 1111 validation/test records. The 529 features contain the WiFi fingerprint, the coordinates where it was taken, and other useful information. Given that this paper focusses on the setting of univariate responses, we only aim to predict the 'Longitude'.
  \end{enumerate}
  
\subsection{Results}
\label{subsec:results}
%\subsubsection{Crossvalidation Results}
We perform five-fold cross validation on each of these datasets and report the average Root Mean Square (RMS) error on the hold-out test sets. \Cref{tab:boston,tab:originofmusic,tab:blogger,tab:ctslices,tab:indoorlocalize} present the cross-validated RMS error of our proposed method (DisCoMax), and six other supervised dimensionality reduction techniques namely; LSDR \citep{lsdr}, gKDR \citep{gKDR}, SCA \citep{sca}, LAD \citep{lad}, SAVE \citep{save1} and \citep{save2} and SIR \citep{sir}.

In case of DisCoMax, we use the methodology described in sub-section \ref{subsec:methodology}. For other methods we used in our evaluation, these techniques generate explicit maps to obtain the low-dimensional representations. As in the case of the methodology for DisCoMax, we use these explicit maps and Support Vector Regression (with a RBF kernel) to generate cross-validated RMS errors on the responses.

We fix folds across the seven techniques presented within each of the tables (\Cref{tab:boston,tab:originofmusic,tab:blogger,tab:ctslices,tab:indoorlocalize}). We also compute RMS errors for increasing dimensions $d=3, 5, 7, 9$ and $11$. We note the significant improvement in the predictive performance (smaller error) of DisCoMax learnt features across for all cases with different dimensionality, and also gradual increase performance (smaller error) as we increase dimensionality learnt features. 

For baseline comparison purposes, in case of the Boston Housing dataset, we observe a RMS error of 0.1719 using Support Vector Regression without any dimensionality reduction $(d=13)$. This when compared to DisCoMax RMS errors which ranged between 0.1559 ($d=3$) and 0.1297 ($d=11$) always did worse. We bold errors for DisCoMax for cases where errors were significantly better when compared with their corresponding standard deviations taken into account. 
\begin{table}
    \begin{tabular}{lrrrrr}
    \toprule
	    Method/dimension & 3       & 5        & 7        & 9        & 11       \\
	  \midrule
 	    DisCoMax & \textbf{0.1559} & \textbf{0.1493} & \textbf{0.1327} & \textbf{0.1311} & \textbf{0.1297} \\
	    LSDR \citep{lsdr}	& 0.1978  & 0.1963 & 0.1892 & 0.1886 & 0.1873 \\
	    gKDR \citep{gKDR}	& 0.1997 	& 0.1813 & 0.1762 & 0.1738 & 0.1719 \\
	    SCA \citep{sca}	& 0.1875  & 0.1796 & 0.1708 & 0.1637 & 0.1602 \\
	    LAD \citep{lad} & 0.2019  & 0.1964 & 0.1932 & 0.1917 & 0.1903  \\
	    SAVE \citep{save1} & 0.2045  & 0.1983 & 0.1967 & 0.1952 & 0.1947 \\
	    SIR \citep{sir} & 0.2261  & 0.2193 & 0.2086 & 0.2076 & 0.2068 \\
	  \bottomrule
    \end{tabular}
    \caption {Boston Housing \citep{harrison1978hedonic}: U.S Census Service concerning housing in the area of Boston Mass. To predict median value of owner-occupied homes. Baseline results SVR RMS error 0.1719.}
    \label{tab:boston}
\end{table}

\begin{table}
\begin{tabular}{rlrrrrr}
  \toprule
	 &  Method/d & 3 & 5 & 7 & 9 & 11 \\ 
  \midrule
  		& DisCoMax & \textbf{19.19} & \textbf{18.67} & \textbf{18.14} & \textbf{17.94} & \textbf{17.81} \\ 
	   & LSDR \citep{lsdr} & 23.63 & 22.31 & 22.09 & 21.93 & 21.82 \\ 
	   & gKDR \citep{gKDR} & 24.06 & 23.39 & 22.76 & 22.52 & 22.50 \\ 
	   & SCA \citep{sca} & 23.17 & 24.96 & 24.21 & 23.34 & 23.06 \\
	   & LAD \citep{lad} & 26.74 & 25.57 & 24.39 & 24.26 & 24.20 \\
	   & SAVE \citep{save1} & 28.18 & 27.82 & 27.62 & 27.53 & 27.50 \\ 
	   & SIR \citep{sir} & 29.92 & 29.46 & 29.18 & 28.86 & 28.63 \\
   \bottomrule
\end{tabular}
\caption{Geographical Origin of Music \citep{graf20112d}: The input contains audio features extracted from 1059 wave files covering 33 countries/areas. The task associated with the data is to predict the geographical origin of music.}	
\label{tab:originofmusic}
\end{table}

\begin{table}
    \begin{tabular}{rlrrrrr}
      \toprule
     & Method/d & 3 & 5 & 7 & 9 & 11 \\ 
      \midrule
      & DisCoMax & \textbf{25.82} & \textbf{24.69} & \textbf{24.33} & \textbf{23.90} & \textbf{23.62} \\ 
      & LSDR \citep{lsdr} & 30.36 & 28.16 & 27.39 & 27.24 & 27.18 \\ 
       & gKDR \citep{gKDR} & 29.72 & 27.62 & 27.29 & 26.91 & 26.81 \\
       & SCA \citep{sca} & 28.53 & 27.31 & 26.60 & 26.32 & 26.30 \\ 
       & LAD \citep{lad} & 30.42 & 30.39 & 30.20 & 30.04 & 29.99 \\ 
       & SAVE \citep{save1} & 31.93 & 31.27 & 30.72 & 30.53 & 30.31 \\ 
    & SIR \citep{sir} & 33.63 & 32.65 & 31.39 & 31.16 & 30.83 \\ 
       \bottomrule
    \end{tabular}
    \caption{BlogFeedback \citep{buza2014feedback}: This data contains features computed from raw HTML documents of blog posts. The task associated with this data is to predict the number of comments in the upcoming 24 hours.}	
    \label{tab:blogger}
\end{table}

\begin{table}
	\begin{tabular}{rlrrrrr}
	  \toprule
		 	& Method/d & 3 & 5 & 7 & 9 & 11 \\ 
		\midrule
 		& DisCoMax & \textbf{12.29} & \textbf{11.11} & \textbf{10.19} & \textbf{9.73} & \textbf{9.66} \\ 
		   & LSDR \citep{lsdr} & 14.38 & 13.14 & 12.87 & 12.73 & 12.69 \\
		   & gKDR \citep{gKDR} & 13.65 & 12.86 & 12.67 & 12.35 & 12.05 \\  
		   & SCA \citep{sca} & 14.19 & 13.64 & 12.94 & 12.12 & 11.73 \\ 
		   & LAD \citep{lad} & 17.70 & 17.62 & 17.34 & 17.15 & 16.89 \\ 
		   & SAVE \citep{save1} & 19.32 & 18.74 & 18.62 & 17.76 & 17.21 \\ 
		   & SIR \citep{sir} & 21.53 & 21.23 & 20.97 & 20.77 & 20.64 \\ 
		   \bottomrule
	\end{tabular}
	\caption {Relative location of CT slices  \citep{zhou2014predicting}: Dataset consists of 385 features extracted from CT images. Features are concatenation of two histograms in polar space. The response variable is the relative location of an image on the axial axis. } 	
	\label{tab:ctslices} 
\end{table}
	
\begin{table}
    \begin{tabular}{lrrrrr}
    \toprule
    	 Method/d & 3 & 5 & 7 & 9 & 11 \\
    	 \midrule
    	 DisCoMax & \textbf{12.28} & \textbf{11.10} & \textbf{10.19} & \textbf{9.73} & \textbf{9.65} \\
    	LSDR \citep{lsdr} & 14.38 & 13.14 & 12.86 & 12.73 & 12.69 \\
    	gKDR \citep{gKDR} & 13.65 & 12.86 & 12.67 & 12.34 & 12.05 \\
    	SCA \citep{sca} & 14.18 & 13.63 & 12.94 & 12.12 & 11.73 \\
    	LAD \citep{lad} & 17.69 & 17.62 & 17.34 & 17.15 & 16.89 \\
    	SAVE \citep{save1} & 19.32 & 18.74 & 18.61 & 17.75 & 17.20 \\
    	SIR \citep{sir} & 21.53 & 21.23 & 20.97 & 20.77 & 20.63 \\
    \bottomrule
    \end{tabular}
    \caption {UJI Indoor Localization \citep{torres2014ujiindoorloc}: Multi-Building Multi-Floor indoor localization database. The task is to predict the actual longitude and latitude. The 529 attributes contain the WiFi fingerprint, the coordinates where it was taken. The database consists of around 20ktraining/reference records and 11k validation/test records.}
    \label{tab:indoorlocalize}
\end{table}

\section{Discussion}
\label{sec:discussion}
In this section, we discuss effects of choice of $\alpha$ in the optimization of \textbf{Problem (S)} (Algorithm~\ref{alg:discomax}). We also empirically show optimization of \textbf{Problem (P)} using Algorithm \ref{alg:metadiscomax}, which optimizes a lower bound in \textbf{Problem (Q)}. We use the Boston Housing dataset for our analysis.

Figures \ref{fig:innerAlpha1} and \ref{fig:innerAlpha2} show gradual increase in sample distance correlations $\hat{\rho}(\mathbf{X},\mathbf{Z}_t)$ (Blue) and $\hat{\rho}(\mathbf{Z}_t,\mathbf{y})$ (Red) with respect the number of fixed point $t$ for two different choices of $\alpha = 6\times10^{4}$ and $\alpha = 70\times10^{4}$. We clearly observe that the choice of $\alpha$ has a strong effect on rate of increase/decrease of individual distance correlations $\hat{\rho}^2(\mathbf{X},\mathbf{Z}_t)$ and $\hat{\rho}^2(\mathbf{Z}_t,\mathbf{y})$ as iterations progress. This is because the $\alpha$ value positively weighs the term $\Trace{\mathbf{Z}^T\mathbf{S}_{\mathbf{X},\mathbf{y}}\mathbf{Z}}$ over  $\Trace{\mathbf{Z}^T\mathbf{L}_{\mathbf{M} }\mathbf{Z}}$ in \textbf{Problem (S)}. Figure~\ref{fig:innerSumofsq} shows the rate of change of objective function $f(\mathbf{Z})$ with respect to the fixed point iterations $t$ for two choices of $\alpha$. The figure clearly shows the slower (faster) rate of increase of $f(\mathbf{Z})$ for smaller (larger) $\alpha$.

Figure~\ref{fig:outerMMdcorr} and \ref{fig:outerMMsumofsq} repectively show the overall growth of distance correlations ($\hat{\rho}(\mathbf{X},\mathbf{Z})$, $\hat{\rho}(\mathbf{Z},\mathbf{y})$) and $f(\mathbf{Z})$, with respect to the fixed point iterations ($t$), for $\alpha^*=800\times10^{4}$. We periodically observe a sharp increases in $f(\mathbf{Z})$ and distance correlations after each DisCoMax subproblem of 220 fixed point iterations. The figures show four such G-MM iterations of Algorithm~\ref{alg:metadiscomax}. These sharp increases are due to the resubstitution of $\mathbf{M}=\mathbf{Z}_k$ in Step~\ref{alg:line:goptim} of Algorithm~\ref{alg:metadiscomax}. This clearly shows us that we are able to maximized are original proposed objective in \textbf{Problem (P)}.

\begin{figure}
    \begin{minipage}[b]{0.32\textwidth}
        \centering
        \includegraphics[width=1\textwidth]{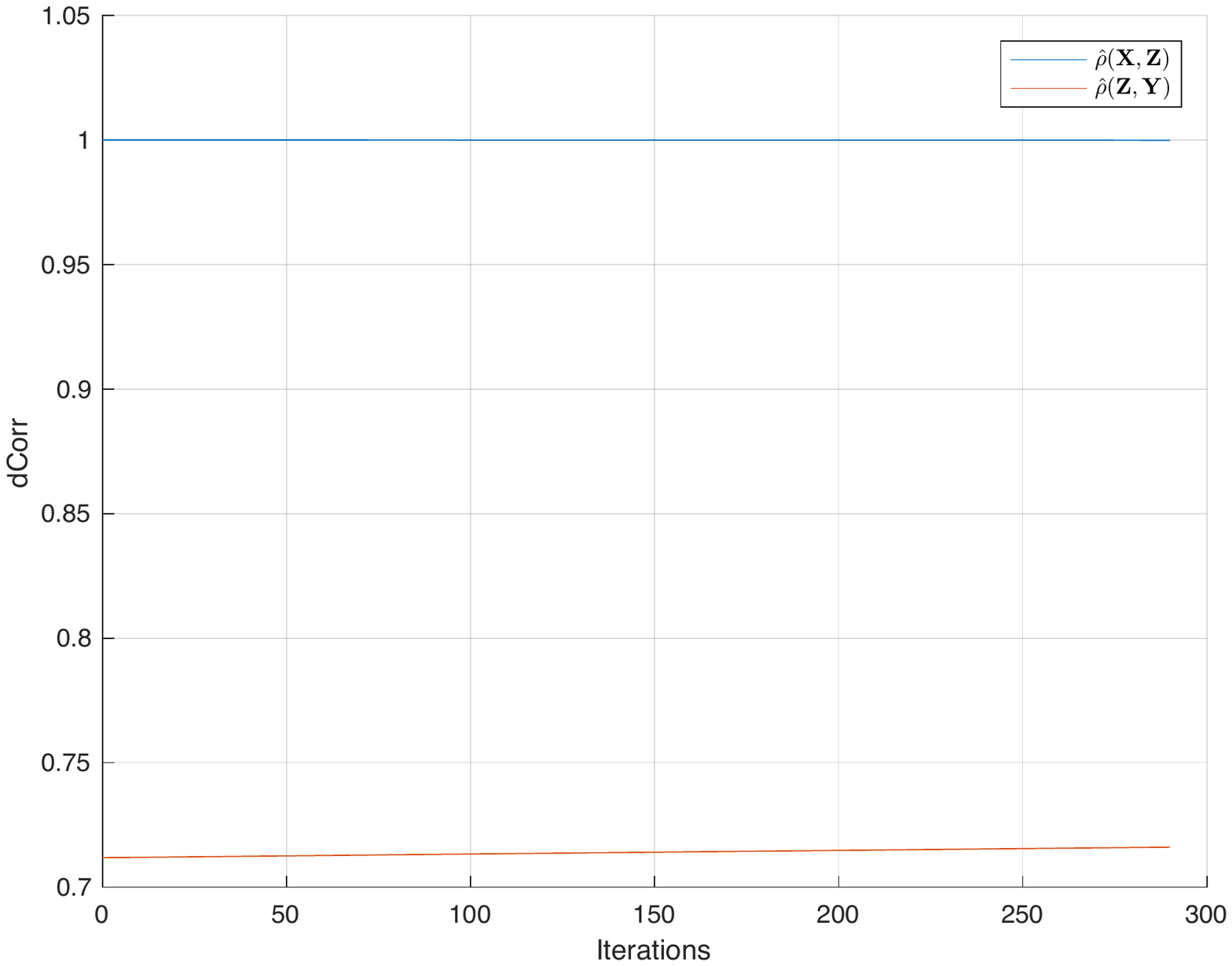}
        \subcaption{$\hat{\rho}(\mathbf{X},\mathbf{Z})$ (Blue) and $\hat{\rho}(\mathbf{Z},\mathbf{y})$ (Red) vs. fixed point iterations ($t$) for $\alpha = 6\times10^{4}$}
        \label{fig:innerAlpha1}
    \end{minipage}
    \begin{minipage}[b]{0.32\textwidth}
        \centering
        \includegraphics[width=1\textwidth]{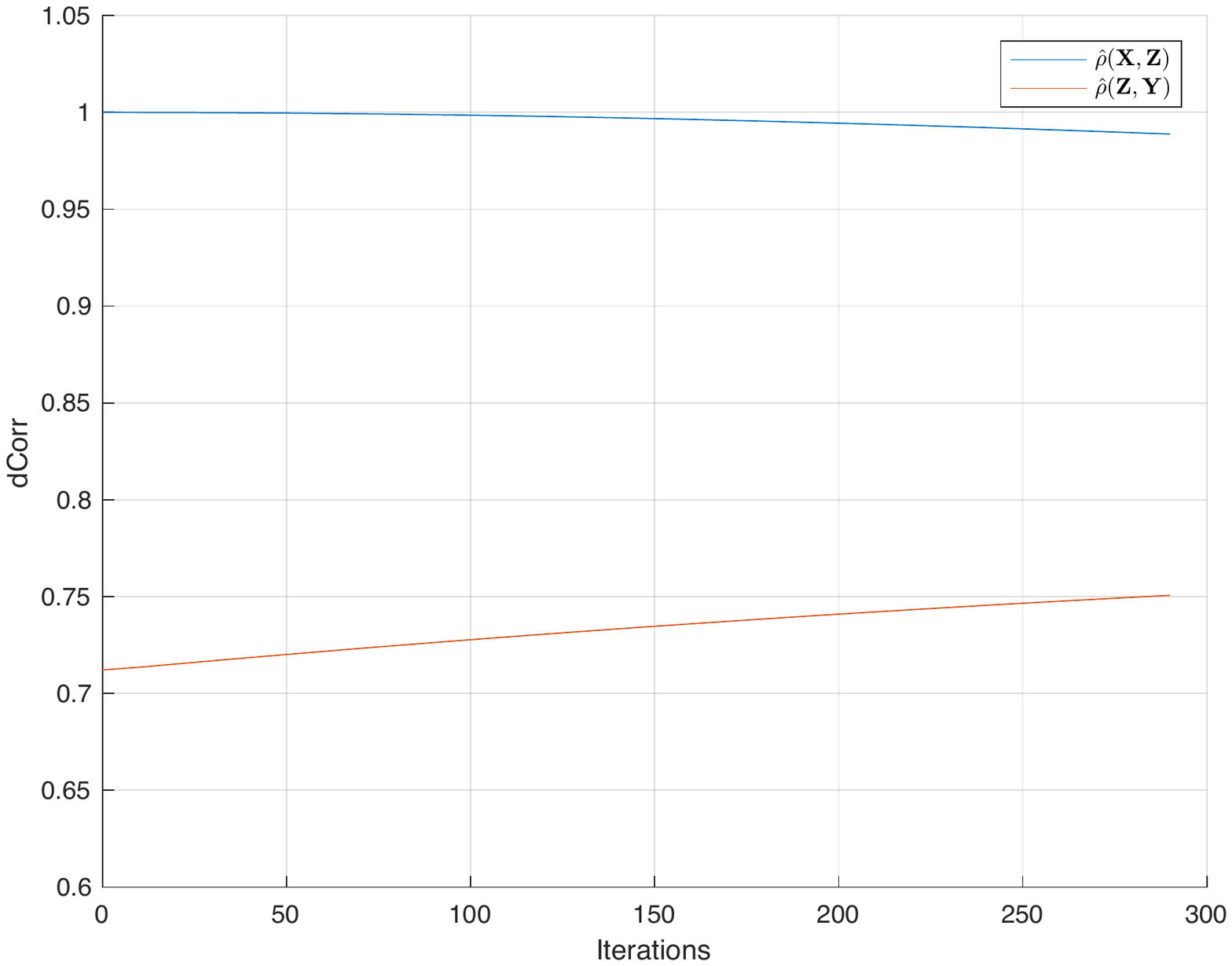} 
        \subcaption{$\hat{\rho}(\mathbf{X},\mathbf{Z})$ (Blue) and $\hat{\rho}(\mathbf{Z},\mathbf{y})$ (Red) vs. fixed point iterations ($t$) for $\alpha=70\times10^{4}$}
        \label{fig:innerAlpha2}
    \end{minipage}
    \begin{minipage}[b]{0.32\textwidth}
        \centering
        \includegraphics[width=1\textwidth]{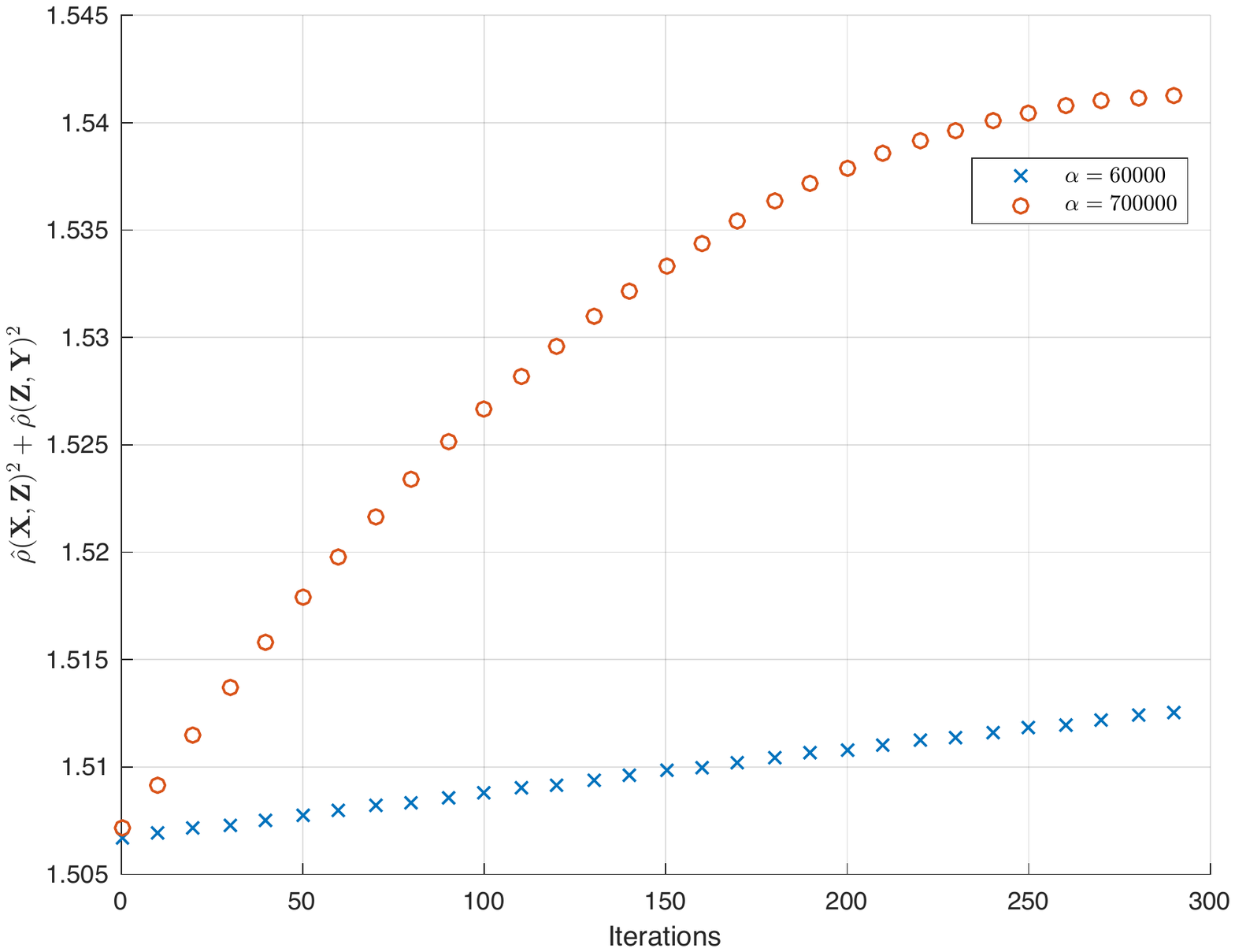} 
        \subcaption{$f(\mathbf{Z})$ vs. fixed point iterations for $\alpha = 6\times10^{4}$ (Red) for $\alpha=70\times10^{4}$ (Blue).}
        \label{fig:innerSumofsq}
    \end{minipage}
    \caption{Effect of $\alpha$ values on growth of the proposed objective in Algorithm~\ref{alg:discomax} the figures show slower (faster) growth of distance correlations for smaller (larger) $\alpha$.}
\end{figure}

\begin{figure}
    \begin{minipage}[b]{0.5\textwidth}
        \centering
        \includegraphics[width=1\textwidth]{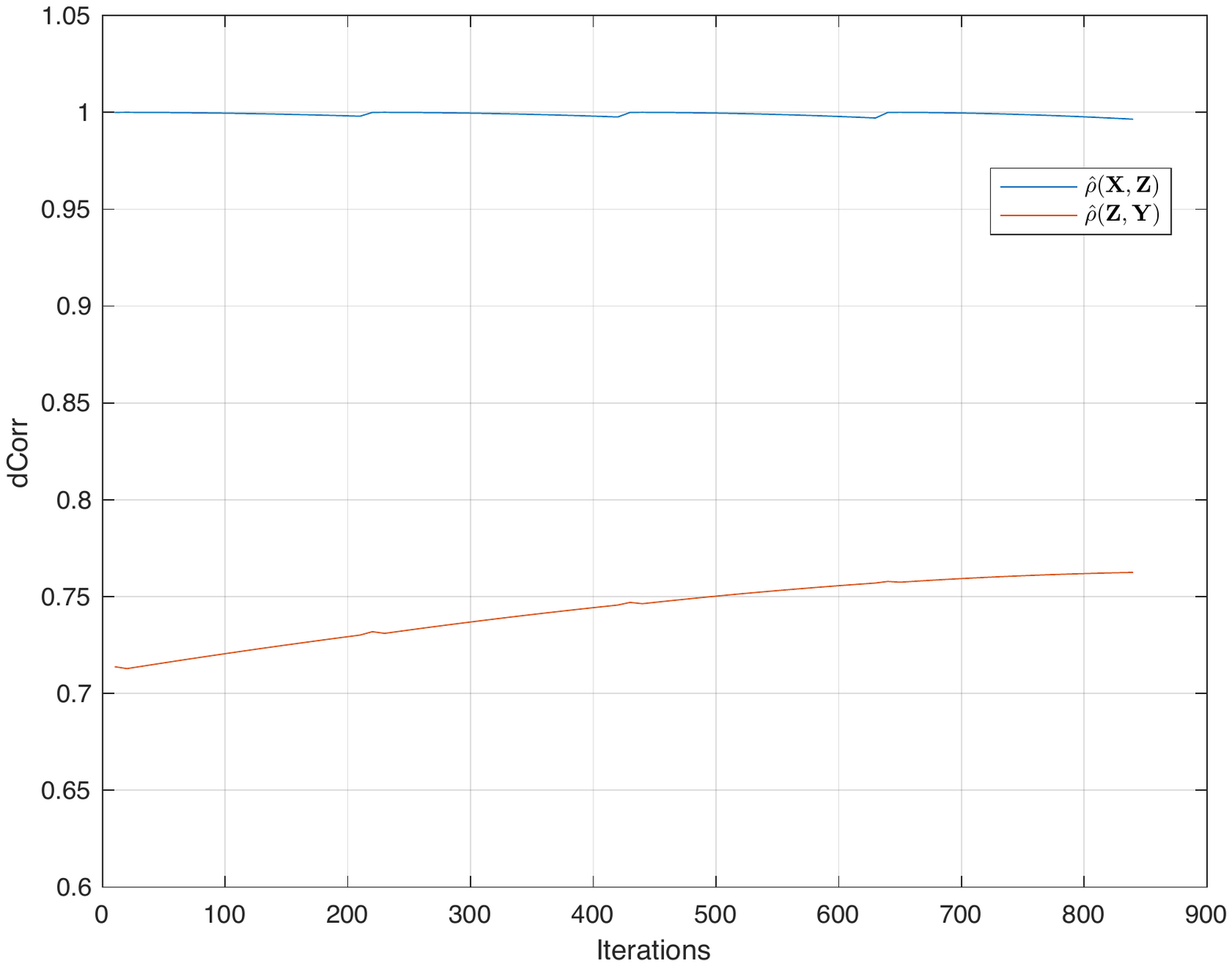}
        \subcaption{$\hat{\rho}(\mathbf{X},\mathbf{Z})$ (Blue) and $\hat{\rho}(\mathbf{Z},\mathbf{y})$ (Red)  vs overall Iterations.}
        \label{fig:outerMMdcorr}
    \end{minipage}
    ~
    \begin{minipage}[b]{0.5\textwidth}
        \centering
        \includegraphics[width=1\textwidth]{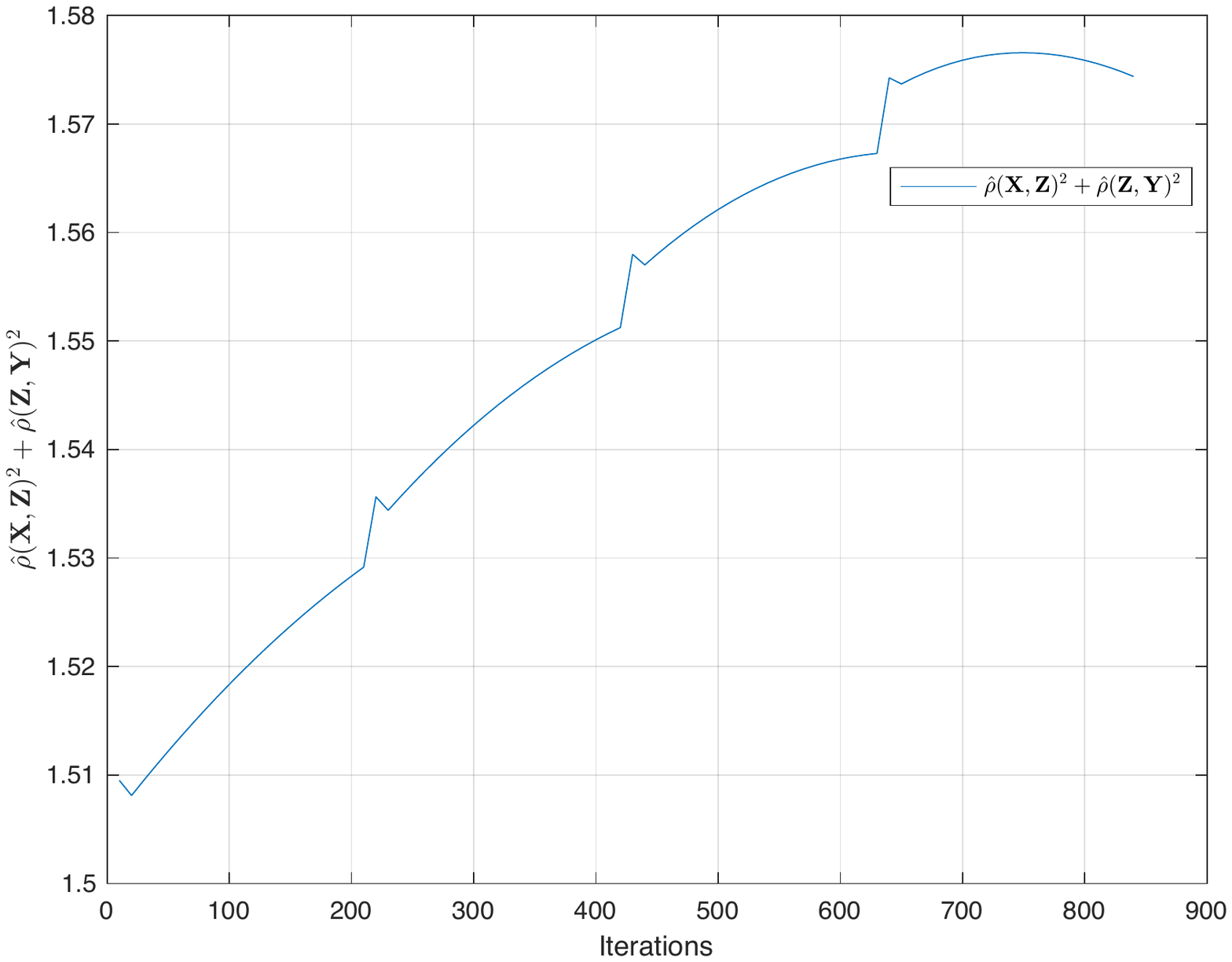}
        \subcaption{$f(\mathbf{Z})=\hat{\rho}(\mathbf{X},\mathbf{Z})^2+\hat{\rho}(\mathbf{Z},\mathbf{y})^2$ vs overall Iterations.}
        \label{fig:outerMMsumofsq}
    \end{minipage}
    \caption{Overall gradual increase in $f(\mathbf{Z})$ (Figure~\ref{fig:outerMMdcorr}) and distance correlations (Figure~\ref{fig:outerMMsumofsq}) for $\alpha^*=800\times10^{4}$. Plots show increase in both for each DisCoMax subproblem of (Algorithm~\ref{alg:discomax}) and four outer G-MM iterations of Algorithm~\ref{alg:metadiscomax}}
    \label{fig:outerMM}
\end{figure}

\section{Conclusion}
\label{sec:conclusion}
In our work, we proposed a novel method to perform supervised dimensionality reduction. Our method aims to maximize an objective based on a statistical measure of dependence called statistical distance correlation. Our proposed method does not necessarily constrain the dimension reduction projection to be linear. We also propose a novel algorithm to optimize our proposed objective using the Generalized Minorization-Maximization approach of \citet{parizi2015generalized}. Finally, we show a superior empirical performance of our method on several regression problems in comparison to existing state-of-the-art methods. 

For future work, we aim to extend our framework to handle multivariate responses $\mathbf{y}\in\mathbb{R}^q$, as distance correlation is applicable to variables with arbitrary dimensions. Our proposed approach is practically applicable on relatively small datasets, as it involves repeatedly solving multiple optimization subproblems. So we aim to to simplyfy this approach so that it is tractable for larger size (several thousands of examples) datasets. In our work, we currently tackle the out-of-sample issue by learning mutiple SVR's, one for each dimension of $\mathbf{z}$, we plan to extend our framework so as to learn explicit out-of-sample mappings from $\mathbf{x}$ to $\mathbf{z}$.

\FloatBarrier
\bibliography{conference}
\bibliographystyle{plainnat}

\newpage
\appendix
\section{Spectral Radius of the Fixed Point Iterate $T(\mathbf{Z}_t)$}
To prove Lemma~\ref{thm:indlemma}, required for proving convergence in Theorem~\ref{thm:connectingthm}, we need to show that the spectral radius $\lambda_{max}(\mathbf{H})<1$. We show this in Theorem~\ref{thm:fixedpointiterate} and proceed to prove it by first by proving two required lemmas below.
%%%
\begin{lemma}
	For any choice of $ \gamma^2  > \lambda_{max}(\mathbf{D}_{\mathbf{X}}, \mathbf{L}_{\mathbf{M}})$ and $\mathbf{P} \colon= \left( \gamma^2 \mathbf{D}_{\mathbf{X}} - \mathbf{L}_{\mathbf{M}} \right)$, we have  $\mathbf{P} \succeq 0 $.
	\label{thm:gamma}
\end{lemma}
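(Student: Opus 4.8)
The plan is to reduce the positive‑semidefiniteness of $\mathbf{P}=\gamma^{2}\mathbf{D}_{\mathbf{X}}-\mathbf{L}_{\mathbf{M}}$ to a single scalar comparison between $\gamma^{2}$ and the top eigenvalue of a symmetric matrix. First I would collect the structural facts: $\mathbf{L}_{\mathbf{M}}=2\mathbf{M}^{T}\mathbf{M}$ is (twice) a Gram matrix, hence symmetric with $\mathbf{L}_{\mathbf{M}}\succeq 0$; and $\mathbf{D}_{\mathbf{X}}=\diag(\mathbf{L}_{\mathbf{X}})$ is diagonal with entries $[\mathbf{L}_{\mathbf{X}}]_{ii}=2\Norm{\widetilde{\mathbf{x}}_i}^{2}\ge 0$, so $\mathbf{D}_{\mathbf{X}}\succeq 0$, and in fact $\mathbf{D}_{\mathbf{X}}\succ 0$ as soon as no sample coincides with the sample mean (otherwise one uses the $\epsilon$-regularization noted in the footnote). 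Then $\mathbf{D}_{\mathbf{X}}^{1/2}$ and $\mathbf{D}_{\mathbf{X}}^{-1/2}$ are well defined, symmetric and positive definite.

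Next I would apply the congruence by $\mathbf{D}_{\mathbf{X}}^{-1/2}$. Since a congruence by an invertible matrix preserves inertia, $\mathbf{P}\succeq 0$ iff $\mathbf{D}_{\mathbf{X}}^{-1/2}\mathbf{P}\,\mathbf{D}_{\mathbf{X}}^{-1/2}=\gamma^{2}\mathbf{I}-\mathbf{C}\succeq 0$, where $\mathbf{C}:=\mathbf{D}_{\mathbf{X}}^{-1/2}\mathbf{L}_{\mathbf{M}}\mathbf{D}_{\mathbf{X}}^{-1/2}$ is symmetric PSD; by the spectral theorem this holds iff $\gamma^{2}\ge\lambda_{max}(\mathbf{C})$. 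Finally I would identify $\lambda_{max}(\mathbf{C})$ with the generalized eigenvalue in the hypothesis: if $\mathbf{C}v=\lambda v$ with $v\ne 0$ then $x:=\mathbf{D}_{\mathbf{X}}^{-1/2}v$ satisfies $\mathbf{L}_{\mathbf{M}}x=\lambda\mathbf{D}_{\mathbf{X}}x$, so the eigenvalues of $\mathbf{C}$ are exactly the generalized eigenvalues of the pencil $(\mathbf{L}_{\mathbf{M}},\mathbf{D}_{\mathbf{X}})$ and $\lambda_{max}(\mathbf{C})=\sup_{x\ne 0}\frac{x^{T}\mathbf{L}_{\mathbf{M}}x}{x^{T}\mathbf{D}_{\mathbf{X}}x}$. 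A shorter route avoids square roots: for every $x\ne 0$, $x^{T}\mathbf{P}x=\big(\gamma^{2}-\tfrac{x^{T}\mathbf{L}_{\mathbf{M}}x}{x^{T}\mathbf{D}_{\mathbf{X}}x}\big)\,x^{T}\mathbf{D}_{\mathbf{X}}x\ge 0$ once $\gamma^{2}$ exceeds that supremum; I would likely present this form.

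The main obstacle is bookkeeping, not computation: one must be careful which of the two pencils the hypothesis refers to, since only the largest $\lambda$ solving $\mathbf{L}_{\mathbf{M}}x=\lambda\mathbf{D}_{\mathbf{X}}x$ bounds the Rayleigh quotient $x^{T}\mathbf{L}_{\mathbf{M}}x/x^{T}\mathbf{D}_{\mathbf{X}}x$ from above (this is the reciprocal of the smallest generalized eigenvalue of $(\mathbf{D}_{\mathbf{X}},\mathbf{L}_{\mathbf{M}})$ when $\mathbf{L}_{\mathbf{M}}\succ 0$). I would state the lower bound on $\gamma^{2}$ consistently with that pencil and with the notation used in Algorithm~\ref{alg:gsdiscomax} and Algorithm~\ref{alg:discomax}. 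A secondary point is guaranteeing $\mathbf{D}_{\mathbf{X}}\succ 0$ rather than merely $\succeq 0$ so the congruence/Rayleigh step is valid; if some $\widetilde{\mathbf{x}}_i=\mathbf{0}$ one restricts to the range of $\mathbf{D}_{\mathbf{X}}$ or uses the regularization. Everything else follows in a line from the spectral theorem.
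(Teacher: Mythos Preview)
Your proposal is correct, and the ``shorter route'' you say you would present---writing $x^{T}\mathbf{P}x=\bigl(\gamma^{2}-\tfrac{x^{T}\mathbf{L}_{\mathbf{M}}x}{x^{T}\mathbf{D}_{\mathbf{X}}x}\bigr)x^{T}\mathbf{D}_{\mathbf{X}}x$ and bounding the Rayleigh quotient by the top generalized eigenvalue---is exactly the paper's one-line argument. Your extra care about which pencil the bound refers to and about ensuring $\mathbf{D}_{\mathbf{X}}\succ 0$ are valid clarifications the paper leaves implicit.
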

\begin{proof}
	To show $\mathbf{z}^T(\gamma^2\mathbf{D}_{\mathbf{X}}-\mathbf{L}_{\mathbf{M}})\mathbf{z} \geq 0$ for all $\mathbf{z}$, we require that $\gamma^2 \geq \frac{\mathbf{z}^T\mathbf{L}_{\mathbf{M}}\mathbf{z}}{\mathbf{z}^T\mathbf{D}_{\mathbf{X}}\mathbf{z}}$ for all $\mathbf{z}$. This is always true for all values of $\gamma^2 \geq \lambda_{max}(\mathbf{D}_{\mathbf{X}}, \mathbf{L}_{\mathbf{M}})$.
\end{proof}

\begin{lemma}
	If $0 = \alpha_l \leq \alpha \leq \alpha_u=\lambda_{min}(\mathbf{L}_{\mathbf{M}},\mathbf{S}_{\mathbf{X},\mathbf{y}})$ and $\mathbf{Q}\colon=\left(  \mathbf{L}_{\mathbf{M}} - \alpha \mathbf{S}_{\mathbf{X},\mathbf{y}} \right)$, then we have $\mathbf{Q} \succeq 0 $.
	\label{thm:alpha}
\end{lemma}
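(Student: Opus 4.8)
The plan is to mirror the argument of Lemma~\ref{thm:gamma}, this time bounding a generalized Rayleigh quotient from below instead of from above. First I would record that $\mathbf{S}_{\mathbf{X},\mathbf{y}} = k_X\mathbf{L}_{\mathbf{X}} + k_Y\mathbf{L}_{\mathbf{y}}$ is a nonnegative combination of graph Laplacians, hence positive semidefinite, and --- after the regularization convention $\mathbf{A}_i \leftarrow \mathbf{A}_i + \epsilon\mathbf{I}$ used throughout Section~\ref{sec:theory} --- in fact positive definite; likewise $\mathbf{L}_{\mathbf{M}} \succeq 0$. Consequently $\mathbf{z}^T\mathbf{S}_{\mathbf{X},\mathbf{y}}\mathbf{z} > 0$ for every $\mathbf{z} \neq \mathbf{0}$, and for such $\mathbf{z}$ one has
\[
  \mathbf{z}^T\mathbf{Q}\mathbf{z} = \mathbf{z}^T\mathbf{L}_{\mathbf{M}}\mathbf{z} - \alpha\,\mathbf{z}^T\mathbf{S}_{\mathbf{X},\mathbf{y}}\mathbf{z} \geq 0
  \quad\Longleftrightarrow\quad
  \alpha \leq \frac{\mathbf{z}^T\mathbf{L}_{\mathbf{M}}\mathbf{z}}{\mathbf{z}^T\mathbf{S}_{\mathbf{X},\mathbf{y}}\mathbf{z}}.
\]
So it suffices to show that $\alpha$ does not exceed the infimum over $\mathbf{z} \neq \mathbf{0}$ of the generalized Rayleigh quotient on the right.

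Next I would identify that infimum with $\lambda_{min}(\mathbf{L}_{\mathbf{M}},\mathbf{S}_{\mathbf{X},\mathbf{y}})$. Since $\mathbf{S}_{\mathbf{X},\mathbf{y}} \succ 0$, I can write $\mathbf{S}_{\mathbf{X},\mathbf{y}} = \mathbf{R}^T\mathbf{R}$ with $\mathbf{R}$ invertible and substitute $\mathbf{w} = \mathbf{R}\mathbf{z}$; this turns the generalized Rayleigh quotient into the ordinary one $\mathbf{w}^T(\mathbf{R}^{-T}\mathbf{L}_{\mathbf{M}}\mathbf{R}^{-1})\mathbf{w}/\mathbf{w}^T\mathbf{w}$. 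By Courant--Fischer its minimum over $\mathbf{w} \neq \mathbf{0}$ is the smallest eigenvalue of $\mathbf{R}^{-T}\mathbf{L}_{\mathbf{M}}\mathbf{R}^{-1}$, which is exactly the smallest root $\lambda$ of $\det(\mathbf{L}_{\mathbf{M}} - \lambda\mathbf{S}_{\mathbf{X},\mathbf{y}}) = 0$, i.e.\ $\lambda_{min}(\mathbf{L}_{\mathbf{M}},\mathbf{S}_{\mathbf{X},\mathbf{y}})$; the minimizing $\mathbf{z}$ is the corresponding generalized eigenvector, so the infimum is attained.

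Finally, the hypothesis $\alpha \leq \alpha_u = \lambda_{min}(\mathbf{L}_{\mathbf{M}},\mathbf{S}_{\mathbf{X},\mathbf{y}})$ gives $\mathbf{z}^T\mathbf{Q}\mathbf{z} \geq 0$ for all $\mathbf{z} \neq \mathbf{0}$, and the case $\mathbf{z} = \mathbf{0}$ is trivial, so $\mathbf{Q} \succeq 0$. I would also note, for completeness, that the interval $[\alpha_l,\alpha_u] = [0,\lambda_{min}(\mathbf{L}_{\mathbf{M}},\mathbf{S}_{\mathbf{X},\mathbf{y}})]$ is nonempty because $\mathbf{L}_{\mathbf{M}} \succeq 0$ forces $\lambda_{min}(\mathbf{L}_{\mathbf{M}},\mathbf{S}_{\mathbf{X},\mathbf{y}}) \geq 0$. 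There is no genuine obstacle here; the one point that needs care is that the identification of the infimum with the minimal generalized eigenvalue relies on $\mathbf{S}_{\mathbf{X},\mathbf{y}} \succ 0$, which is precisely what the regularization convention of Section~\ref{sec:theory} supplies (without it one would only get positive semidefiniteness and the pencil could be degenerate).
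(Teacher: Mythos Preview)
Your proposal is correct and follows essentially the same route as the paper: both reduce $\mathbf{Q}\succeq 0$ to the inequality $\alpha \leq \mathbf{z}^T\mathbf{L}_{\mathbf{M}}\mathbf{z}/\mathbf{z}^T\mathbf{S}_{\mathbf{X},\mathbf{y}}\mathbf{z}$ for all $\mathbf{z}$ and then identify the infimum of this generalized Rayleigh quotient with $\lambda_{min}(\mathbf{L}_{\mathbf{M}},\mathbf{S}_{\mathbf{X},\mathbf{y}})$. The paper simply asserts that identification in one line, whereas you additionally justify it via the factorization $\mathbf{S}_{\mathbf{X},\mathbf{y}}=\mathbf{R}^T\mathbf{R}$ and Courant--Fischer and make explicit the role of the regularization convention in ensuring $\mathbf{S}_{\mathbf{X},\mathbf{y}}\succ 0$; this extra care is welcome but does not constitute a different approach.
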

\begin{proof}
	To show $\mathbf{z}^T(\mathbf{L}_{\mathbf{M}} - \alpha\mathbf{S}_{\mathbf{X},\mathbf{y}})\mathbf{z} \geq 0$ for all $\mathbf{z}$, we require that $\alpha \leq \frac{
		\mathbf{z}^T\mathbf{L}_{\mathbf{M}} \mathbf{z}
		}{
		\mathbf{z}^T\mathbf{S}_{\mathbf{X},\mathbf{y}} \mathbf{z}
	}$ for all $\mathbf{z}$. 
	This is always true if all values of 
	$\alpha \leq \min_{\mathbf{Z}}\frac{
		\mathbf{z}^T\mathbf{L}_{\mathbf{M}} \mathbf{z}
		}{
		\mathbf{z}^T\mathbf{S}_{\mathbf{X},\mathbf{y}} \mathbf{z}
	 }= \lambda_{min}(\mathbf{L}_{\mathbf{M}},\mathbf{S}_{\mathbf{X},\mathbf{y}})$ which is true by our choice of $\alpha$.
\end{proof}

We now utilize the above to results to prove $\lambda_{max}(\mathbf{H}) \leq 1 $ about the fixed point iterate $\mathbf{Z}_{t+1}=\mathbf{H}\mathbf{Z}_t$.
\begin{theorem}
	For the update equation ${\mathbf{Z}}_{t+1} = \mathbf{H}{\mathbf{Z}}_{t}$ with 
	\[
	    \mathbf{H} = 
	    \left( 
		    \gamma^2 \mathbf{D}_{\mathbf{X}}  - \alpha \mathbf{S}_{\mathbf{X},\mathbf{y}} 
	    \right)^{\dag} 
	    \left( 
	    \gamma^2 \mathbf{D}_{\mathbf{X}} - \mathbf{L}_{\mathbf{M}}  
	    \right),
	\]
	we have $\lambda_{max}(\mathbf{H}) \leq 1$.
	\label{thm:fixedpointiterate}
\end{theorem}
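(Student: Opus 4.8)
Abbreviate $\mathbf{A} := \gamma^2\mathbf{D}_{\mathbf{X}} - \alpha\mathbf{S}_{\mathbf{X},\mathbf{y}}$ and $\mathbf{B} := \gamma^2\mathbf{D}_{\mathbf{X}} - \mathbf{L}_{\mathbf{M}}$, so that $\mathbf{H} = \mathbf{A}^{\dagger}\mathbf{B}$ and $\mathbf{A} - \mathbf{B} = \mathbf{L}_{\mathbf{M}} - \alpha\mathbf{S}_{\mathbf{X},\mathbf{y}}$. The plan is to distill the two preceding lemmas into the single Loewner chain $\mathbf{A} \succeq \mathbf{B} \succeq \mathbf{0}$: the inequality $\mathbf{A} \succeq \mathbf{B}$ is precisely $\mathbf{A} - \mathbf{B} = \mathbf{Q} \succeq \mathbf{0}$, which is Lemma~\ref{thm:alpha} (our $\alpha$ satisfies $0 \le \alpha \le \lambda_{min}(\mathbf{L}_{\mathbf{M}},\mathbf{S}_{\mathbf{X},\mathbf{y}})$), and $\mathbf{B} \succeq \mathbf{0}$ is $\mathbf{B} = \mathbf{P} \succeq \mathbf{0}$, which is Lemma~\ref{thm:gamma} (our $\gamma^2$ exceeds $\lambda_{max}(\mathbf{D}_{\mathbf{X}},\mathbf{L}_{\mathbf{M}})$). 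In particular $\mathbf{A} \succeq \mathbf{0}$, and everything about the eigenvalues of $\mathbf{H}$ will follow from this ordering.

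First I would dispose of the nondegenerate case $\mathbf{A} \succ \mathbf{0}$, which is the generic situation and is in any case enforced by the $\epsilon$-regularization $\mathbf{A}\leftarrow\mathbf{A}+\epsilon\mathbf{I}$ noted in the main text. Then $\mathbf{A}^{\dagger} = \mathbf{A}^{-1}$ and $\mathbf{H} = \mathbf{A}^{-1}\mathbf{B} = \mathbf{A}^{-1/2}\big(\mathbf{A}^{-1/2}\mathbf{B}\,\mathbf{A}^{-1/2}\big)\mathbf{A}^{1/2}$ is similar to the symmetric matrix $\widehat{\mathbf{H}} := \mathbf{A}^{-1/2}\mathbf{B}\,\mathbf{A}^{-1/2}$; hence $\mathbf{H}$ has real eigenvalues, equal to those of $\widehat{\mathbf{H}}$. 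Conjugating $\mathbf{0} \preceq \mathbf{B} \preceq \mathbf{A}$ by $\mathbf{A}^{-1/2}$ gives $\mathbf{0} \preceq \widehat{\mathbf{H}} \preceq \mathbf{I}$, so every eigenvalue of $\mathbf{H}$ lies in $[0,1]$; in particular $\lambda_{max}(\mathbf{H}) \le 1$. (If moreover $\mathbf{Q}\succ\mathbf{0}$, i.e. $\alpha<\lambda_{min}(\mathbf{L}_{\mathbf{M}},\mathbf{S}_{\mathbf{X},\mathbf{y}})$ strictly, the same computation yields $\widehat{\mathbf{H}}\prec\mathbf{I}$ and hence the strict bound $\lambda_{max}(\mathbf{H})<1$ referred to in the text.)

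For completeness I would then record the genuinely singular case directly in terms of the Moore--Penrose inverse. Since $\mathbf{A} = \mathbf{B} + \mathbf{Q}$ with $\mathbf{B},\mathbf{Q} \succeq \mathbf{0}$, we get $\ker\mathbf{A} = \ker\mathbf{B}\cap\ker\mathbf{Q} \subseteq \ker\mathbf{B}$, and by symmetry $\mathrm{range}(\mathbf{B}) \subseteq \mathrm{range}(\mathbf{A}) =: W$. Consequently $\mathbf{H}$ is block-diagonal for $\mathbb{R}^n = W \oplus W^{\perp}$: it annihilates $W^{\perp} = \ker\mathbf{A}$ (these vectors are already killed by $\mathbf{B}$), contributing only the eigenvalue $0$, while on $W$ the operator $\mathbf{A}$ restricts to a positive definite map whose inverse is $\mathbf{A}^{\dagger}|_W$ and $\mathbf{B}|_W \preceq \mathbf{A}|_W$, so the congruence argument of the previous paragraph, applied on $W$, bounds the remaining eigenvalues by $1$.

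The computational content is essentially one line — the two Loewner inequalities combine and are then conjugated by $\mathbf{A}^{-1/2}$ — so I expect no analytic difficulty. The only step that warrants care is the pseudoinverse bookkeeping when $\mathbf{A}$ is singular, and that is exactly what the inclusion $\mathrm{range}(\mathbf{B})\subseteq\mathrm{range}(\mathbf{A})$ handles cleanly; alternatively one simply invokes the regularization and argues throughout with $\mathbf{A}^{-1}$.
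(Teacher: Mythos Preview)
Your proof is correct and takes a genuinely different (and more elementary) route than the paper's. You pivot on the Loewner chain $\mathbf{A}\succeq\mathbf{B}\succeq\mathbf{0}$ and the similarity $\mathbf{A}^{-1}\mathbf{B}\sim\mathbf{A}^{-1/2}\mathbf{B}\mathbf{A}^{-1/2}$, which immediately yields eigenvalues in $[0,1]$ by a one-line congruence. The paper instead rewrites $\mathbf{H}=(\mathbf{P}+\mathbf{Q})^{-1}\mathbf{P}$ (in its notation $\mathbf{P}=\mathbf{B}$, $\mathbf{Q}=\mathbf{A}-\mathbf{B}$), applies the Woodbury identity and a further positive-definite identity to reach $\mathbf{H}=\mathbf{I}-(\mathbf{P}+\mathbf{Q})^{-1}\mathbf{Q}$, and then recognizes this as an instance of a general preconditioned gradient update $T(\mathbf{Z})=\mathbf{Z}-\beta\mathbf{B}^{-1}\nabla f(\mathbf{Z})$ from \citet{YinZhangRichard:1999ty}, verifying their hypothesis $\mathbf{0}\preceq\nabla^2 f\preceq 2\mathbf{B}/\beta$ to import the spectral-radius bound. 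Your argument is self-contained and avoids both the matrix-identity manipulations and the external citation; the paper's detour, while correct, buys mainly the interpretation of $\mathbf{H}$ as a majorization-minimization step rather than any analytic advantage. Your explicit treatment of the pseudoinverse case via $\ker\mathbf{A}\subseteq\ker\mathbf{B}$ is also cleaner than the paper, which simply works with inverses throughout.
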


\begin{proof}
The update equation looks as follows
\begin{align*}
	{\mathbf{Z}}_{t+1} = 
	\left( 
		\gamma^2 \mathbf{D}_{\mathbf{X}} - \alpha \mathbf{S}_{\mathbf{X},\mathbf{y}} 
	\right)^{\dagger} 
	\left( 
		\gamma^2 \mathbf{D}_{\mathbf{X}} - \mathbf{L}_{\mathbf{M}}  
	\right) 
	{\mathbf{Z}}_{t}.
\end{align*}
For sake of simplicity assume $\mathbf{P} = \left( \gamma^2 \mathbf{D}_{\mathbf{X}} - \mathbf{L}_\mathbf{M} \right)$ and $\mathbf{Q} = \left(  \mathbf{L}_\mathbf{M} - \alpha \mathbf{S}_{\mathbf{X},\mathbf{y}} \right) $.
\begin{align*}
	{\mathbf{Z}}_{t+1} = 
	\left( 
		\mathbf{P} + \mathbf{Q} 
	\right)^{-1} 
	\mathbf{P} {\mathbf{Z}}_{t}
\end{align*}
Using the Woodbury matrix identity 
$(\mathbf{A} + \mathbf{U}\mathbf{B}\mathbf{V})^{-1} = \mathbf{A}^{-1} - \mathbf{A}^{-1}\mathbf{U}(\mathbf{B}^{-1}+\mathbf{V}\mathbf{A}^{-1}\mathbf{U})^{-1}\mathbf{V}\mathbf{A}^{-1}$, and
setting $\mathbf{U} = \mathbf{I}$ and $\mathbf{V} = \mathbf{I}$, we get, $(\mathbf{A} + \mathbf{B})^{-1} = \mathbf{A}^{-1} - \mathbf{A}^{-1} (\mathbf{B}^{-1}+  \mathbf{A}^{-1})^{-1}\mathbf{A}^{-1}$.
Applying this to the previous equation we get
\begin{align*}
	{\mathbf{Z}}_{t+1}  
	& =  (\mathbf{P}^{-1} - \mathbf{P}^{-1} (\mathbf{P}^{-1} 
	+  \mathbf{Q}^{-1})^{-1} \mathbf{P}^{-1}) 
	\mathbf{P}{\mathbf{Z}}_{t}  =  \mathbf{I} - \mathbf{P}^{-1} (\mathbf{P}^{-1} +  \mathbf{Q}^{-1})^{-1} {\mathbf{Z}}_{t}  \\
	 & =  \mathbf{I} - \mathbf{P}^{-1} 
	 \left( 
	 	(\mathbf{P}^{-1} +  \mathbf{Q}^{-1})^{-1}\mathbf{Q}^{-1} 
	 \right) \mathbf{Q} {\mathbf{Z}}_{t}
\end{align*}
Using the positive definite identity $(\mathbf{P}^{-1} + \mathbf{B}^T\mathbf{Q}^{-1}\mathbf{B})^{-1}
	 \mathbf{B}^T \mathbf{Q}^{-1} 
	= \mathbf{P}\mathbf{B}^T (\mathbf{B}\mathbf{P}\mathbf{B}^T
	+ \mathbf{Q})^{-1}$
for $\mathbf{B} = \mathbf{I}$ we get, $(\mathbf{P}^{-1} + \mathbf{Q}^{-1})^{-1} \mathbf{Q}^{-1} = \mathbf{P}(\mathbf{P}+\mathbf{Q})^{-1}$, which simplifies the term in the brackets as,
\begin{align*}
	{ \mathbf{Z}}_{t+1}  
	& =  \mathbf{I} - \mathbf{P}^{-1} 
	\left( \mathbf{P}(\mathbf{P}+\mathbf{Q})^{-1} \right) 
	\mathbf{Q} {\mathbf{Z}}_{t}
	=  \mathbf{I} - (\mathbf{P}+\mathbf{Q})^{-1}\mathbf{Q} {\mathbf{Z}}_{t}
\end{align*}
If we compare the above equation with a the general update equation from \cite{YinZhangRichard:1999ty}, which is of the form
\begin{align*}
	T({\mathbf{Z}}_{t+1})  =  {\mathbf{Z}}_{t} 
	- \beta({\mathbf{Z}}_{t})\mathbf{B}({\mathbf{Z}}_{t})^{-1}
	\nabla f ({\mathbf{Z}}_{t})
\end{align*}
where $\nabla f(\mathbf{Z}_t)$ is the gradient of the objective function $f(\mathbf{Z})$ we get,
	\begin{align*}
		\beta({\mathbf{Z}}_t) = \frac{1}{2},\hspace{1cm}
		\mathbf{B}({\mathbf{Z}}_t)  = \mathbf{P} + \mathbf{Q},\hspace{1cm}
		\nabla f ({\mathbf{Z}}_t) = 2\mathbf{Q}{\mathbf{Z}_t}
	\end{align*}
Now from Theorem~\ref{thm:gamma} we conclude that $\mathbf{B}({\mathbf{Z}}) \succeq 0 $,
We also check the following condition from \cite{YinZhangRichard:1999ty} that
\begin{align*}
	0 \preceq  \nabla^2f(\mathbf{Z}) \preceq \frac{2\mathbf{B}}{\beta}.
\end{align*}
or equivalently, as in our case $0 \preceq  2\mathbf{Q} \preceq 4 (\mathbf{Q}+ \mathbf{P})$, which is indeed true. Hence it follows that  $\lambda_{max}(T'(\mathbf{Z})) \leq 1$ which implies $\lambda_{max}(\mathbf{H}) \leq 1$.
\end{proof}
We now proceed to show that at end of every $(t+1)$ fixed point iterations we  have $\Trace{\mathbf{Z}_{t+1}^T\mathbf{L}_{\mathbf{Z}_{t+1}}\mathbf{Z}_{t+1}} \leq \Trace{\mathbf{Z}_{t+1}\mathbf{L}_{\mathbf{Z}_0}\mathbf{Z}_{t+1}}$.

\begin{lemma} For fixed point iteration $\mathbf{Z}_{t+1} = \mathbf{H}\mathbf{Z}_t$ for optimization of $\mathbf{Z}_{k+1}=\arg\max_{\mathbf{Z}} g(\mathbf{Z}, \mathbf{Z}_k)$, we have, $\Trace{\mathbf{Z}_{k+1}^T\mathbf{L}_{\mathbf{Z}_{k+1}}\mathbf{Z}_{k+1}} \leq \Trace{\mathbf{Z}_{k+1}\mathbf{L}_{\mathbf{Z}_k}\mathbf{Z}_{k+1}}$.
    \label{thm:indlemma}
\end{lemma}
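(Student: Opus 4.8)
The plan is to recast the trace inequality as a comparison of two positive semidefinite matrices and then feed it the spectral estimate $\lambda_{\max}(\mathbf{H})\le 1$ from Theorem~\ref{thm:fixedpointiterate}. Since $\mathbf{Z}_0$ is column‑centered and, as already used in the proof of Theorem~\ref{thm:connectingthm}, $\mathbf{L}_{\mathbf{Z}}=2\mathbf{Z}\mathbf{Z}^{\top}$ for a column‑centered $\mathbf{Z}$ (and both $\Trace{\mathbf{Z}^{\top}\mathbf{S}_{\mathbf{X},\mathbf{y}}\mathbf{Z}}$ and $\Trace{\mathbf{Z}^{\top}\mathbf{L}_{\mathbf{Z}}\mathbf{Z}}$ depend on $\mathbf{Z}$ only through $\mathbf{J}\mathbf{Z}$, so one may project with $\mathbf{J}$ whenever needed), I would write
\[
\Trace{\mathbf{Z}_{k+1}^{\top}\mathbf{L}_{\mathbf{Z}_{k+1}}\mathbf{Z}_{k+1}}=2\Trace{\mathbf{C}^{2}},\qquad
\Trace{\mathbf{Z}_{k+1}^{\top}\mathbf{L}_{\mathbf{Z}_{k}}\mathbf{Z}_{k+1}}=2\Trace{\mathbf{C}\mathbf{D}},
\]
with $\mathbf{C}:=\mathbf{Z}_{k+1}\mathbf{Z}_{k+1}^{\top}\succeq 0$ and $\mathbf{D}:=\mathbf{Z}_{k}\mathbf{Z}_{k}^{\top}\succeq 0$, using cyclicity of the trace and $\Trace{\mathbf{Z}_{k+1}^{\top}\mathbf{Z}_{k}\mathbf{Z}_{k}^{\top}\mathbf{Z}_{k+1}}=\Trace{\mathbf{C}\mathbf{D}}$. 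Thus the lemma is equivalent to $\Trace{\mathbf{C}(\mathbf{D}-\mathbf{C})}\ge 0$, and it is enough to establish the Loewner bound $\mathbf{C}\preceq\mathbf{D}$, since then $\Trace{\mathbf{C}(\mathbf{D}-\mathbf{C})}$ is the trace of a product of two positive semidefinite matrices.

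\textbf{Bringing in the fixed point.} The inner loop of Algorithm~\ref{alg:discomax} yields $\mathbf{Z}_{k+1}=\mathbf{H}^{m}\mathbf{Z}_{k}$ for some $m\ge 1$, where $\mathbf{H}=(\gamma^{2}\mathbf{D}_{\mathbf{X}}-\alpha\mathbf{S}_{\mathbf{X},\mathbf{y}})^{\dagger}(\gamma^{2}\mathbf{D}_{\mathbf{X}}-\mathbf{L}_{\mathbf{Z}_{k}})$. Set $\mathbf{A}:=\gamma^{2}\mathbf{D}_{\mathbf{X}}-\alpha\mathbf{S}_{\mathbf{X},\mathbf{y}}$ and $\mathbf{P}:=\gamma^{2}\mathbf{D}_{\mathbf{X}}-\mathbf{L}_{\mathbf{Z}_{k}}$, so $\mathbf{A}-\mathbf{P}=\mathbf{L}_{\mathbf{Z}_{k}}-\alpha\mathbf{S}_{\mathbf{X},\mathbf{y}}$. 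Lemma~\ref{thm:gamma} gives $\mathbf{P}\succeq 0$, Lemma~\ref{thm:alpha} gives $\mathbf{A}-\mathbf{P}\succeq 0$, and $\mathbf{A}\succ 0$; hence $\mathbf{H}=\mathbf{A}^{-1}\mathbf{P}$ is similar, via $\mathbf{Z}\mapsto\mathbf{A}^{1/2}\mathbf{Z}$, to the symmetric matrix $\widetilde{\mathbf{H}}:=\mathbf{A}^{-1/2}\mathbf{P}\mathbf{A}^{-1/2}$ with $0\preceq\widetilde{\mathbf{H}}\preceq\mathbf{I}$ (which is exactly the spectral content of Theorem~\ref{thm:fixedpointiterate}). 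Consequently $\mathbf{H}$ is a contraction in the $\mathbf{A}$‑weighted inner product: $\mathbf{H}^{\top}\mathbf{A}\mathbf{H}=\mathbf{P}\mathbf{A}^{-1}\mathbf{P}=\mathbf{A}^{1/2}\widetilde{\mathbf{H}}^{2}\mathbf{A}^{1/2}\preceq\mathbf{A}$, hence $(\mathbf{H}^{m})^{\top}\mathbf{A}\,\mathbf{H}^{m}\preceq\mathbf{A}$, and conjugating by $\mathbf{Z}_{k}$ gives $\mathbf{Z}_{k+1}^{\top}\mathbf{A}\,\mathbf{Z}_{k+1}\preceq\mathbf{Z}_{k}^{\top}\mathbf{A}\,\mathbf{Z}_{k}$. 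Equivalently, writing $\mathbf{B}:=\mathbf{A}^{1/2}\mathbf{Z}_{k}\mathbf{Z}_{k}^{\top}\mathbf{A}^{1/2}\succeq 0$, one has $\mathbf{A}^{1/2}\mathbf{C}\mathbf{A}^{1/2}=\widetilde{\mathbf{H}}^{m}\mathbf{B}\widetilde{\mathbf{H}}^{m}$ and $\mathbf{A}^{1/2}\mathbf{D}\mathbf{A}^{1/2}=\mathbf{B}$, so the whole comparison can be carried out with the symmetric operator $\widetilde{\mathbf{H}}^{m}$, for which $0\preceq(\widetilde{\mathbf{H}}^{m})^{2}\preceq\widetilde{\mathbf{H}}^{m}\preceq\mathbf{I}$. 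The finishing ingredient is the elementary fact that $0\preceq\mathbf{E}\preceq\mathbf{F}$ implies $\Trace{\mathbf{E}^{2}}\le\Trace{\mathbf{F}^{2}}$ (because $\Trace{\mathbf{F}^{2}-\mathbf{E}^{2}}=\Trace{(\mathbf{F}-\mathbf{E})(\mathbf{F}+\mathbf{E})}\ge 0$), applied after a congruence by $\mathbf{B}^{1/2}$. Once $\Trace{\mathbf{C}^{2}}\le\Trace{\mathbf{C}\mathbf{D}}$ is in hand, the lemma is combined with the rescaling step (c) of the outer strategy — which forces $\Trace{\mathbf{Z}_{k+1}^{\top}\mathbf{L}_{\mathbf{Z}_{k+1}}\mathbf{Z}_{k+1}}\ge 1$, so that the square root of that quantity is at most itself — to deliver statement~\ref{enum:connectingthm:max} of Theorem~\ref{thm:connectingthm}, precisely as used there.

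\textbf{Main obstacle.} The delicate point is exactly the step $\mathbf{C}\preceq\mathbf{D}$ (or, more precisely, $\Trace{\mathbf{C}^{2}}\le\Trace{\mathbf{C}\mathbf{D}}$): the estimate $\lambda_{\max}(\mathbf{H})\le 1$ alone is \emph{not} enough, because $\mathbf{H}$ is neither self‑adjoint nor a contraction in the standard inner product — its singular values can exceed one even though its eigenvalues do not — so one cannot simply say $\mathbf{H}^{m}\mathbf{D}(\mathbf{H}^{m})^{\top}\preceq\mathbf{D}$. The contraction property holds only in the $\mathbf{A}$‑geometry, so the argument must stay inside that geometry (working with $\widetilde{\mathbf{H}}^{m}$ and $\mathbf{B}$) and only transfer back to the unweighted objectives $\Trace{\mathbf{C}^{2}}$, $\Trace{\mathbf{C}\mathbf{D}}$ at the very end; making the $\mathbf{A}^{\pm 1/2}$ conjugations cancel cleanly, rather than leaving stray $\mathbf{A}^{-1}$ factors sandwiched between powers of $\widetilde{\mathbf{H}}^{m}$, is the part of the proof that genuinely needs care and is the reason the whole derivation routes through the symmetrized iterate $\widetilde{\mathbf{H}}$ instead of $\mathbf{H}$ directly.
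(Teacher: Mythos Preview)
Your route is genuinely different from the paper's. The paper does not pass through $\Trace{\mathbf{C}^2}\le\Trace{\mathbf{C}\mathbf{D}}$ or the $\mathbf{A}$–geometry at all: it simply writes $\mathbf{L}_{\mathbf{Z}_{t+1}}=\mathbf{H}\mathbf{L}_{\mathbf{Z}_t}\mathbf{H}^{T}$, substitutes an eigendecomposition $\mathbf{H}=\mathbf{Q}\mathbf{\Lambda}\mathbf{Q}^{T}$ with $\mathbf{\Lambda}\preceq\mathbf{I}$ (Theorem~\ref{thm:fixedpointiterate}), replaces $\mathbf{\Lambda}$ by $\mathbf{I}$ to obtain $\Trace{\mathbf{Z}_{t+1}^{T}\mathbf{L}_{\mathbf{Z}_{t+1}}\mathbf{Z}_{t+1}}\le\Trace{\mathbf{Z}_{t+1}^{T}\mathbf{L}_{\mathbf{Z}_t}\mathbf{Z}_{t+1}}$, and then cascades this inequality down to $t=0$. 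Your ``main obstacle'' paragraph is exactly the soft spot of that argument: $\mathbf{H}=\mathbf{A}^{-1}\mathbf{P}$ is not symmetric, so an orthogonal diagonalization $\mathbf{Q}\mathbf{\Lambda}\mathbf{Q}^{T}$ is not available in general, and replacing $\mathbf{\Lambda}$ by $\mathbf{I}$ in a non-orthogonal factorization does not produce an inequality. In that sense your diagnosis is sharper than the paper's own proof.

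That said, your proposal does not actually repair the gap. You concede that the $\mathbf{A}^{\pm1/2}$ conjugations leave ``stray $\mathbf{A}^{-1}$ factors'', and indeed in symmetrized coordinates one has
\[
\Trace{\mathbf{C}^2}=\Trace{\bigl(\widetilde{\mathbf{H}}^{m}\mathbf{B}\widetilde{\mathbf{H}}^{m}\mathbf{A}^{-1}\bigr)^2},\qquad
\Trace{\mathbf{C}\mathbf{D}}=\Trace{\widetilde{\mathbf{H}}^{m}\mathbf{B}\widetilde{\mathbf{H}}^{m}\mathbf{A}^{-1}\mathbf{B}\mathbf{A}^{-1}},
\]
and no single congruence by $\mathbf{B}^{1/2}$ turns this into $\Trace{\mathbf{E}^2}\le\Trace{\mathbf{F}^2}$ with $0\preceq\mathbf{E}\preceq\mathbf{F}$; the $\mathbf{A}^{-1}$'s do not cancel. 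Nor can you fall back on the Loewner bound $\mathbf{C}\preceq\mathbf{D}$: even with $0\preceq\widetilde{\mathbf{H}}^{m}\preceq\mathbf{I}$ one does \emph{not} have $\widetilde{\mathbf{H}}^{m}\mathbf{B}\widetilde{\mathbf{H}}^{m}\preceq\mathbf{B}$ in general (take $\widetilde{\mathbf{H}}^{m}=\mathrm{diag}(1,0)$ and $\mathbf{B}$ the $2\times2$ all-ones matrix). So the ``finishing ingredient'' you name does not finish, and the proposal stops precisely at the step that needs an argument --- the same step the paper handles by tacitly treating $\mathbf{H}$ as orthogonally diagonalizable.
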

\begin{proof}
	Laplacian for a weighted adjacency matrix $\mathbf{W}$ (with self loops) is defined as $\mathbf{L} = \mathbf{D} - \mathbf{W}$ where $\mathbf{D}$ is a diagonal degree matrix with diagonal elements $[\mathbf{D}]_{i,i}=\sum_j [\mathbf{W}]_{i,j}$ and zero off-diagonal entries \citep{chung1997lecture}. 
	For adjacency matrix $\widehat{\mathbf{E}}_\mathbf{Z}$ we have $\widehat{\mathbf{E}}_\mathbf{Z}=\mathbf{J}\mathbf{E}_{\mathbf{Z}}\mathbf{J}=-2\widetilde{\mathbf{Z}}\widetilde{\mathbf{Z}}^T$ \citep{torgerson1952multidimensional}. We have Laplacian as $\mathbf{L}_{\mathbf{Z}}=\mathbf{D}_{\mathbf{Z}} - \widehat{\mathbf{E}}_\mathbf{Z}$ with $\mathbf{D}_{\mathbf{Z}}=0$. This gives us for $\mathbf{Z}_{t+1}$ the Laplacian $\mathbf{L}_{\mathbf{Z}_{t+1}} = 2\mathbf{Z}_{t+1}\mathbf{Z}_{t+1}^T$. It also follows from the fact that since we choose our intialization $\mathbf{Z}_0$ as column-centered matrix, and $\mathbf{Z}_{t+1}=\mathbf{H}\mathbf{Z}_t$ are also successively  column-centered for all $t>0$. Hence, $\mathbf{L}_{\mathbf{Z}_{t+1}} =2\widehat{\mathbf{Z}}_{t+1}\widehat{\mathbf{Z}}_{t+1}^T$.
	Now substituting $\mathbf{Z}_{t+1} = \mathbf{H}\mathbf{Z}_{t}$ in  Laplacian equation $\mathbf{L}_{\mathbf{Z}_{t+1}} $ we get,
	\begin{align}
		\mathbf{L}_{\mathbf{Z}_{t+1}}
		= 2(\mathbf{H}\mathbf{Z}_{t})(\mathbf{H}\mathbf{Z}_{t})^T
		= 2\mathbf{H}\mathbf{Z}_{t}\mathbf{Z}_{t}^T\mathbf{H}^T
		= \mathbf{H}\mathbf{L}_{\mathbf{Z}_{t}} \mathbf{H}^T.
		\label{eqn:LZrelation}
	\end{align}
	Substituting above equation into right hand side of the statement to be proved gives us,
	\begin{align*}
		\Trace{\mathbf{Z}_{t+1}^T\mathbf{L}_{\mathbf{Z}_{t+1}}\mathbf{Z}_{t+1}}
		& = \Trace{
			\mathbf{Z}_{t+1}^T\mathbf{H}\mathbf{L}_{\mathbf{Z}_{t}}
			\mathbf{H}^T\mathbf{Z}_{t+1}
			}.
	\end{align*}
	Substituting eigen decomposition of $\mathbf{H} = \mathbf{Q}\mathbf{\Lambda}\mathbf{Q}^T$ where  $\mathbf{\Lambda}$ is a diagonal eigenvalues matrix with values less than one (Theorem~\ref{thm:fixedpointiterate}) we get,
	\begin{align*}
		\Trace{\mathbf{Z}_{t+1}\mathbf{L}_{\mathbf{Z}_{t+1}}\mathbf{Z}_{t+1}}
		& = \Trace{
			\mathbf{Z}_{t+1}^T(\mathbf{Q}\mathbf{\Lambda}\mathbf{Q}^T)
			\mathbf{L}_{\mathbf{Z}_{t}}
			(\mathbf{Q}^T\mathbf{\Lambda}\mathbf{Q})\mathbf{Z}_{t+1}
			}.
	\end{align*}
	For $\mathbf{\Lambda} = \mathbf{I}$ (identity matrix) gives us,
	\begin{align*}
		\Trace{\mathbf{Z}_{t+1}\mathbf{L}_{\mathbf{Z}_{t+1}}\mathbf{Z}_{t+1}}
		& \leq  \Trace{\mathbf{Z}_{t+1}^T(\mathbf{QIQ}^T) 
			\mathbf{L}_{\mathbf{Z}_{t}}
			(\mathbf{Q}^T\mathbf{I}\mathbf{Q})\mathbf{Z}_{t+1}}  \leq  \Trace{\mathbf{Z}_{t+1}^T\mathbf{L}_{\mathbf{Z}_{t}}\mathbf{Z}_{t+1}}.
	\end{align*}
	Repeating the above process until $t=0$ we get $\Trace{\mathbf{Z}_{t+1}\mathbf{L}_{\mathbf{Z}_{t+1}}\mathbf{Z}_{t+1}} \leq  \Trace{\mathbf{Z}_{t+1}^T\mathbf{L}_{\mathbf{Z}_0}\mathbf{Z}_{t+1}}$.
	Now, for the initialisation $\mathbf{Z}_t=\mathbf{Z}_k$ at $t=0$, and given that $\mathbf{Z}_{k+1}=\arg\max_{\mathbf{Z}} g(\mathbf{Z}, \mathbf{Z}_k)$ we have, 
	\[
	    \Trace{\mathbf{Z}_{k+1}\mathbf{L}_{\mathbf{Z}_{k+1}}\mathbf{Z}_{k+1}} \leq  \Trace{\mathbf{Z}_{k+1}^T\mathbf{L}_{\mathbf{Z}_k}\mathbf{Z}_{k+1}}.
	\]
\end{proof}
Lemma~\ref{thm:indlemma} above allows us to show the following corollary:
\begin{cor} 
For fixed point iteration $\mathbf{Z}_{t+1} = \mathbf{H}\mathbf{Z}_t$ optimization of $\mathbf{Z}_{k+1}=\arg\max_{\mathbf{Z}} g(\mathbf{Z}, \mathbf{Z}_k)$, we have $\Trace{\mathbf{Z}_{k+1}\mathbf{L}_{\mathbf{Z}_{k+1}}\mathbf{Z}_{k+1}} \leq  \Trace{\mathbf{Z}_{k}^T\mathbf{L}_{\mathbf{Z}_k}\mathbf{Z}_{k}}$.
\label{cor:indlemmaone}
\end{cor}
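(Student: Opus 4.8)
The plan is to derive the corollary by chaining Lemma~\ref{thm:indlemma} with one additional \emph{non-expansiveness} estimate for the inner fixed point recursion. Recall that the minimizer $\mathbf{Z}_{k+1}$ returned by Algorithm~\ref{alg:discomax} is obtained from $\mathbf{Z}_k$ by iterating $\mathbf{Z}_{t+1}=\mathbf{H}\mathbf{Z}_t$ from $\mathbf{Z}_0=\mathbf{Z}_k$, so $\mathbf{Z}_{k+1}=\mathbf{H}^{N}\mathbf{Z}_k$ for the (fixed, once $\alpha^\ast$ is located) matrix $\mathbf{H}$ of Theorem~\ref{thm:fixedpointiterate} built with $\mathbf{M}=\mathbf{Z}_k$, and, as already observed in the proof of Lemma~\ref{thm:indlemma}, every intermediate iterate is column-centered, so $\mathbf{L}_{\mathbf{Z}_k}=2\mathbf{Z}_k\mathbf{Z}_k^T\succeq\mathbf{0}$. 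Lemma~\ref{thm:indlemma} already gives $\Trace{\mathbf{Z}_{k+1}^T\mathbf{L}_{\mathbf{Z}_{k+1}}\mathbf{Z}_{k+1}}\leq\Trace{\mathbf{Z}_{k+1}^T\mathbf{L}_{\mathbf{Z}_k}\mathbf{Z}_{k+1}}$, so it remains to show $\Trace{\mathbf{Z}_{k+1}^T\mathbf{L}_{\mathbf{Z}_k}\mathbf{Z}_{k+1}}\leq\Trace{\mathbf{Z}_k^T\mathbf{L}_{\mathbf{Z}_k}\mathbf{Z}_k}$, after which the two inequalities chain together to the claim.

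The key step is a one-step bound: for any matrix $\mathbf{W}$, $\Trace{(\mathbf{H}\mathbf{W})^T\mathbf{L}_{\mathbf{Z}_k}(\mathbf{H}\mathbf{W})}\leq\Trace{\mathbf{W}^T\mathbf{L}_{\mathbf{Z}_k}\mathbf{W}}$. I would prove this by exactly the device used in the proof of Lemma~\ref{thm:indlemma}: write the eigendecomposition $\mathbf{H}=\mathbf{Q}\mathbf{\Lambda}\mathbf{Q}^T$ with $\|\mathbf{\Lambda}\|\leq1$ (Theorem~\ref{thm:fixedpointiterate}), substitute into $\Trace{\mathbf{W}^T\mathbf{Q}\mathbf{\Lambda}\mathbf{Q}^T\mathbf{L}_{\mathbf{Z}_k}\mathbf{Q}\mathbf{\Lambda}\mathbf{Q}^T\mathbf{W}}$, and use $\mathbf{L}_{\mathbf{Z}_k}\succeq\mathbf{0}$ to see that replacing $\mathbf{\Lambda}$ by the identity can only enlarge the (nonnegative) trace. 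Applying this bound $N$ times along $\mathbf{Z}_{k+1}=\mathbf{H}^{N}\mathbf{Z}_k$ --- equivalently, running the same telescoping argument of Lemma~\ref{thm:indlemma} but now peeling the \emph{outer} factor $\mathbf{Z}_{t+1}=\mathbf{H}\mathbf{Z}_t$ down to $\mathbf{Z}_0=\mathbf{Z}_k$ while holding $\mathbf{L}_{\mathbf{Z}_k}$ fixed --- yields $\Trace{\mathbf{Z}_{k+1}^T\mathbf{L}_{\mathbf{Z}_k}\mathbf{Z}_{k+1}}\leq\Trace{\mathbf{Z}_k^T\mathbf{L}_{\mathbf{Z}_k}\mathbf{Z}_k}$, and combining with Lemma~\ref{thm:indlemma} finishes the argument.

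The main obstacle is justifying this one-step bound cleanly, since Theorem~\ref{thm:fixedpointiterate} only furnishes $\lambda_{max}(\mathbf{H})\leq1$ (spectral radius), which for a non-symmetric $\mathbf{H}$ does not by itself control the $\mathbf{L}_{\mathbf{Z}_k}$-weighted quadratic form. I would resolve this the same way the rest of the paper implicitly does: using that $\mathbf{H}=(\mathbf{P}+\mathbf{Q})^{-1}\mathbf{P}$ with $\mathbf{P},\mathbf{Q}\succeq\mathbf{0}$, $\mathbf{H}$ is similar through $\mathbf{P}^{1/2}$ to the symmetric matrix $\mathbf{P}^{1/2}(\mathbf{P}+\mathbf{Q})^{-1}\mathbf{P}^{1/2}\preceq\mathbf{I}$, and carrying out the $\mathbf{\Lambda}\mapsto\mathbf{I}$ comparison in that symmetric coordinate frame makes the estimate rigorous. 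Everything else is bookkeeping already present in the proof of Lemma~\ref{thm:indlemma}.
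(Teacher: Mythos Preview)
Your proposal is correct and follows essentially the same route as the paper: invoke Lemma~\ref{thm:indlemma} for the first inequality, then peel off factors of $\mathbf{H}$ via the eigendecomposition $\mathbf{H}=\mathbf{Q}\mathbf{\Lambda}\mathbf{Q}^T$ and the $\mathbf{\Lambda}\mapsto\mathbf{I}$ replacement to reach $\Trace{\mathbf{Z}_k^T\mathbf{L}_{\mathbf{Z}_k}\mathbf{Z}_k}$. If anything you are more careful than the paper, which writes only a single $\mathbf{H}$ on each side (tacitly $N=1$) and does not address the non-symmetry of $\mathbf{H}$; your observation that $\mathbf{H}=(\mathbf{P}+\mathbf{Q})^{-1}\mathbf{P}$ is similar via $\mathbf{P}^{1/2}$ to a symmetric contraction is the right way to make the $\mathbf{\Lambda}\mapsto\mathbf{I}$ step rigorous.
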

\begin{proof}
From Lemma~\ref{thm:indlemma} we have
\begin{align*}
\Trace{\mathbf{Z}_{k+1}\mathbf{L}_{\mathbf{Z}_{k+1}}\mathbf{Z}_{k+1}} 
    \leq  \Trace{\mathbf{Z}_{k+1}^T\mathbf{L}_{\mathbf{Z}_k}\mathbf{Z}_{k+1}}
    \leq \Trace{\mathbf{Z}_{k}^T\mathbf{H}^T\mathbf{L}_{\mathbf{Z}_k}\mathbf{H}\mathbf{Z}_{k}}
\end{align*}
Following approach similar to proof of Lemma~\ref{thm:indlemma} above by substituting eigen decomposition of $\mathbf{H} = \mathbf{Q}\mathbf{\Lambda}\mathbf{Q}^T$ into equation above we get,
\begin{align*}
\Trace{\mathbf{Z}_{k+1}\mathbf{L}_{\mathbf{Z}_{k+1}}\mathbf{Z}_{k+1}} 
\leq \Trace{\mathbf{Z}_{k}^T((\mathbf{Q}^T\mathbf{I}\mathbf{Q})^T)\mathbf{L}_{\mathbf{Z}_k}(\mathbf{Q}^T\mathbf{I}\mathbf{Q})\mathbf{Z}_{k}} 
\leq \Trace{\mathbf{Z}_{k}^T\mathbf{L}_{\mathbf{Z}_k}\mathbf{Z}_{k}}
\end{align*}
\end{proof}
\end{document}